\newcolumntype{?}{!{\vrule width 1.5pt}}
\newcommand{\citet}[1]{\cite{#1}}
\newcommand{\citep}[1]{\cite{#1}}
\newcommand{\argmax}{\operatornamewithlimits{argmax}}
\newcommand{\argmin}{\operatornamewithlimits{argmin}}
\newcommand{\defeq}{\stackrel{\text{def}}{=}}
\newtheorem{thm}{Theorem}
\newtheorem{lemma}{Lemma}
\newtheorem{cor}{Corollary}[thm]
\newtheorem{defi}{Definition}
\begin{document}
\title{MML is not Consistent for Neyman-Scott}
\author{Michael~Brand$^*$%
\thanks{M.~Brand is with Otzma Analytics Pty Ltd and with the Faculty of IT, Monash University, Clayton, VIC 3800, Australia. e-mail: research@OtzmaAnalytics.com}}

\maketitle

\begin{abstract}
Strict Minimum Message Length (SMML) is an information-theoretic statistical inference method widely cited (but only with informal arguments) as providing estimations that are consistent for general estimation problems. It is, however, almost invariably intractable to compute, for which reason only approximations of it (known as MML algorithms) are ever used in practice.
Using novel techniques that allow for the first time direct, non-approximated
analysis of SMML solutions,
we investigate the Neyman-Scott estimation problem, an oft-cited showcase for the consistency of MML, and show that even with a natural choice of prior neither SMML nor its popular approximations are consistent for it, thereby providing a counterexample to the general claim. This is the first known explicit construction of an SMML solution for a natural, high-dimensional problem.
\end{abstract}

\begin{IEEEkeywords}
consistent estimation, convergence, estimation theory, Ideal Group, inference algorithms, MML, Neyman-Scott, SMML, statistical learning
\end{IEEEkeywords}

\begin{center} \bfseries EDICS Category: MAL-d \end{center}

\IEEEpeerreviewmaketitle

\section{Introduction}

\IEEEPARstart{I}{n} the context of statistical inference and point estimation,
the term \emph{consistency} refers to the ability of an estimator to converge
with probability $1$ to the correct value of the parameter it is estimating
(whatever that parameter's true value may be)
as the number of observations grows to infinity.
(See \citet{lehmann2006theory} for a formal definition.)
Typically, this property is discussed in the context of a specific estimation
problem or when estimating a specific statistic
(e.g., in determining whether or not the average of
independent observations is a consistent estimator for the expectation of
the distribution generating them).

An estimator is said to be consistent (without specifying the estimation
problem) when its consistency property is universal, i.e.\ holds for any choice
of estimation problem. Most popular estimators are not consistent in this
universal sense. For example, \citet{lehmann2006theory} lists explicit
conditions for Maximum Likelihood (ML) estimation to be consistent, and
provides examples of estimation problems for which it is not.

One estimator long believed to be (universally) consistent, however,
\citep{DoweBaxterOliverWallace1998,dowe1997resolving,DoweGardnerOppy2007,BarronCover1991}
is \emph{Minimum Message Length} (MML), and particularly the
Strict MML (SMML) estimator. In \citet{dowe2011mml}, this property is even
considered one of MML's defining characteristics.

MML is a general name for any member
of the family of Bayesian statistical inference methods based on the
information-theoretic minimum message length principle,
which, in turn, is closely related to the family of Minimum Description Length
(MDL) estimators
\citep{rissanen1985minimum,rissanen1987stochastic,rissanen1999hypothesis},
but predates it.
The minimum message length principle
was first introduced in \citet{WallaceBoulton1968}, and the
estimator that follows the principle directly, which was first described in
\citet{wallace1975invariant}, is known as Strict MML (SMML).
We describe it formally in Section~\ref{SS:smml}.

One estimation problem of particular interest in the context of consistency
is the Neyman-Scott problem \citep{neyman1948consistent}.
This is defined as follows.

\begin{defi}
Let $\mu$ be the vector $(\mu_1,\ldots,\mu_N)$.

The \emph{Neyman-Scott problem}
is the problem of jointly estimating the tuple
$(\sigma^2,\mu)$ after observing
$(x_{nj} : n=1,\ldots,N; j=1,\ldots,J)$, each element of which is independently
distributed $x_{nj} \sim N(\mu_n,\sigma^2)$, where $N(\mu_n,\sigma^2)$ is the
normal distribution with mean $\mu_n$ and variance $\sigma^2$.

It is assumed that $J\ge 2$.
\end{defi}

If we let
\[
m_n \defeq \frac{\sum_{j=1}^{J} x_{nj}}{J}
\]
and
\[
s^2 \defeq \frac{\sum_{n=1}^{N} \sum_{j=1}^{J} (x_{nj}-m_n)^2}{NJ},
\]
and if we also let $m$ be the vector $(m_1,\ldots,m_N)$, then $(s^2,m)$ is a
sufficient statistic for this problem, which is why the observables used for
estimating $(\sigma^2,\mu)$ can be taken to be $(s^2,m)$, rather than the
$x$ values directly.

The interesting
case for Neyman-Scott is to observe the behaviour of the estimate for
$\sigma^2$ when this estimate is part of the larger joint estimation problem,
while taking $N$ to infinity and fixing $J$.

Importantly, this set-up is beyond the standard asymptotic regime for
consistency. In standard consistency, one would require the estimator to
estimate $\sigma^2$, and its consistency in doing so would be determined
based on whether this estimate converges to any true value of $\sigma^2$ with
probability $1$. In the Neyman-Scott set-up, the estimator is required to
estimate all of $(\sigma^2,\mu)$, but then only the convergence properties
of the first element are investigated.

This is ostensibly a harder case than standard consistency, because it creates
an \emph{inconsistent posterior} \citep{ghosal1997review},
a situation where even with unlimited data, the uncertainty regarding the
true value of $(\sigma^2,\mu)$ remains high, even though the value of $\sigma^2$
is known with high confidence.

If we describe a (prior) distribution of $\mu$ given $\sigma^2$, and thus
likelihood functions of the observed variables $x$ as a function of $\sigma^2$,
rather than of the tuple $(\sigma^2,\mu)$, the problem of estimating
$\sigma^2$ (alone) is
a standard consistency problem, and for this alternate set-up standard
estimators such as ML are, indeed, consistent. In the special form of
consistency required for standard Neyman-Scott, however (which is sometimes
referred to as ``internal consistency''),
many of the popular estimation methods fail to return a consistent estimate
for $\sigma^2$. Maximum Likelihood, as a case in point, returns
the (inconsistent) estimate $s^2$, rather than $\frac{J}{J-1} s^2$.

MML, on the other hand, has long been believed to be consistent even for the
Neyman-Scott problem
\citep{makalic2012mml,makalic2009minimum,yatracos2015mle,laskar2001modified},
and, more generally, for the wider class of problems of a ``Neyman-Scott
nature'' \citep{dowe1997resolving}, making the Neyman-Scott problem a powerful
and oft-cited showcase for MML's superior consistency properties.

The reasons for the belief in MML's consistency, both in the standard
scenario and in the Neyman-Scott one, are due to MML's theoretical underpinnings
\citep{dowe1997resolving,DoweGardnerOppy2007}, which
we describe in Section~\ref{SS:smml}. Importantly, MML has never been formally
proved to exhibit universal consistency in either scenario.
In fact, there has not been any case for which the SMML estimate is
known, is known to be consistent, and the ML estimate is not.

This lack of ``empirical'' evidence is due to the fact that SMML is
computationally and analytically intractable in all but a select
few single-parameter cases \citep{dowty2015smml}.
It was proved to be NP-hard to compute in general
\citep{farrwallace2002}.

SMML is for this reason only ever used on natural problems by means of one
of its computationally-feasible approximations, known as \emph{MML algorithms}, 
the most widely used of which perhaps being the Wallace-Freeman approximation
(WF-MML) \citet{WallaceFreeman1987}. In this approximated form, while still
not as popular as ML or Maximum A Posteriori (MAP), MML enjoys a wide following,
with over 90 papers published regarding it in 2018 alone, including
\citet{cheeseman2018bayesian,zamzami2018mml,li2018subspace,channoufi2018color}.
However, the strong consistency properties attributed to SMML are not
believed to be as universal for its approximations, for which reason they
cannot be used to provide counterexamples to the general claim.

Nevertheless, as a demonstration of MML's consistency,
\citet{dowe1997resolving} calculated the Wallace-Freeman MML estimate
for the Neyman-Scott problem, and \cite[Sections 4.2--4.9]{Wallace2005}
expanded on this by using another MML approximation, due to Dowe and Wallace,
known as
``Ideal Group'' (IG), which is in some ways a more direct approximation to
SMML but often as intractable as SMML itself.
In both cases, the estimates proved to be consistent.

In this paper, we develop novel methods that allow for the first time direct,
non-approximated analysis of a high-dimensional SMML solution.
Inevitably, given the hardness results regarding SMML, it is not possible
for such methods to be universally applicable to any estimation problem.
However, they can be used in a known, broad class of problems, that we
refer to as \emph{regular} problems. This class includes, among others,
natural problems, including
high-dimensional ones. In particular, we prove that the Neyman-Scott
problem is regular, and use our techniques to analyse the behaviour
of SMML on it, without any assumptions or approximations.

This is done, however, with one caveat.
The Neyman-Scott problem is a frequentist problem, defined by its
likelihoods. To put any Bayesian method, including any of the MML variants
discussed, to use on an estimation problem, one must also define a prior
distribution for the estimated parameters. Bayesian methods accept such
priors as part of the problem description, i.e.\ as given.
Without a specified prior, the
``Neyman-Scott problem'' is, from a Bayesian viewpoint, an entire family of
estimation problems.

The Neyman-Scott problem has been
investigated in the literature with many priors, proper and improper,
informative and uninformative.
The investigations of both  \citet{dowe1997resolving} and \citet{Wallace2005}
are of the Neyman-Scott problem over the prior function $1/\sigma$,
which we will refer to as the \emph{Wallace prior}.
This is a standard, improper, uninformative prior for the problem. However,
it is not the only such prior.
Four such priors that are in common use for the Neyman-Scott problem are
listed, for example, in \citet{yang1996catalog}. In this paper we analyse a
different one of the four, $1/\sigma^{N+1}$, which we refer to as the
\emph{scale free} prior.
Both are commonly-used priors that are in no way considered pathological
for the problem.\footnote{In terms of the properties exhibited by the two
priors discussed,
both have an improper scale-free (i.e., $1/\sigma$) distribution on $\sigma$ and
both have an improper uniform distribution on $\mu$ given $\sigma$, but in
the scale-free prior the $\mu_n$ are individually scale free (i.e., have a
$1/|\mu_n|$ distribution), whereas in the Wallace prior they are individually
uniformly distributed and are independent of $\sigma$.

It should be noted that the scale-free prior's dependence on $N$ (which,
at first sight, might seem suspect to readers more familiar with the
Wallace prior, especially in the context of consistency analysis) is
merely an artefact of the problem's parameterization.
If, instead of using the parameters $(\sigma,\mu)$,
we define the problem over the parameters $(\sigma,\zeta)$, where
$\mu=\zeta\sigma$,
the scale-free prior becomes $1/\sigma$,
identical to the Wallace prior in the original parameterization. Because both
SMML and all MML approximations discussed are invariant to
parameterization, all our conclusions regarding the consistency of these
algorithms are equally applicable for either parameterization.}

The reason we require this alternate prior is that it is the only prior for
which the Neyman-Scott problem is regular. We refer to the
Neyman-Scott problem under the scale-free prior as the
\emph{Scale-free Neyman-Scott problem}. (See Section~\ref{SS:types} for an
explanation of the name ``scale free''.)

By applying our method, we show that SMML is not consistent
for the scale-free Neyman-Scott problem, thus giving a
counterexample to the general claim regarding SMML's universal
consistency properties in Neyman-Scott-like scenarios.\footnote{The
paper's title, ``MML is not consistent for Neyman-Scott'',
should be interpreted as the logical opposite to this general claim,
i.e.\ ``It is not true
that MML is consistent for a general member of the Neyman-Scott estimation
problem family; it will be inconsistent for some Neyman-Scott cases.''}

More generally, our results serve as a strong indication that there is no
reason to
assume SMML holds consistency properties that are superior to those of ML,
because our methods relate SMML to ML not just for Neyman-Scott but also for the
general class of regular problems.
It is in this context that our finding that MML is, in fact,
no better than ML for Neyman-Scott becomes highly significant for
MML at large.

For completion, in Appendix~\ref{S:approximations}
we demonstrate that the MML approximations
used by \citet{dowe1997resolving} and \citet{Wallace2005} converge, for this
estimation problem, to the same limit as SMML, and are therefore also
not consistent for this problem.

\section{Definitions}\label{S:definitions}

\subsection{Notation}\label{SS:notations}

This paper deals with the problem of statistical inference: from a set of
observations, $x$, taken from $X$ (the observation space)
we wish to provide a point estimate,
$\hat{\theta}(x)$, to the value, $\theta$, of a random variable,
$\boldsymbol{\theta}$, drawn from $\Theta$ (parameter
space). When speaking about statistical inference in general, we
use the symbols introduced above. For a specific problem, such as in discussing
the Neyman-Scott problem, we use problem-specific names for the variables.
However, in all cases Latin characters refer to observables, Greek to
unobservables that are to be estimated, boldface characters to random variables,
non-boldface characters to values of said random variables, and hat-notation
to estimates. Boldface is used for the observations, too, when
considering the observations as random variables.

All point estimates discussed in this paper
are defined using an $\argmin$ or an $\argmax$. We take these functions
as, in general, returning sets. Nevertheless, we use
``$\hat{\theta}(x)=\theta$'',
as shorthand for
``$\theta\in\hat{\theta}(x)$'', because in typical usage the maxima/minima
are unique, and the sets returned are therefore singletons. This is what
makes the estimates discussed point estimates.

To be consistent with the notation of \citet{Wallace2005}, we use
$h(\theta)$ to indicate the prior and
\begin{equation}\label{Eq:r}
r(x)=\int_\Theta h(\theta)f(x|\theta)\text{d}\theta
\end{equation}
as the marginal.
The integral of $h(\theta)$ over $\Theta$ may be $1$ (in which case it is
a \emph{proper prior} and the problem is a \emph{proper estimation problem})
but it may also integrate to other positive values (in which case it is a
\emph{scaled prior}) or diverge to infinity (in which case it is an
\emph{improper prior}).
Our analysis will reject a prior as \emph{pathological} only
if it does not allow computation of a marginal using \eqref{Eq:r}.

When speaking of events that have positive probability, we will use the
$\text{Prob}()$ notation. However, in calculating over a scaled or
improper prior some probabilities will be correspondingly scaled when computed
as an integral over the prior or the marginal. For these
we use the $\text{ScaledProb}()$ notation.

For reasons of mathematical convenience, we take both the observation space,
$X$, and the parameter space, $\Theta$, as complete metric spaces, and
assume that priors, likelihoods, posterior probabilities and marginals
are all continuous, differentiable, everywhere-positive functions.
This allows us to take limits,
derivatives, $\argmin$s, $\argmax$s, etc., freely, without having to
prove at every step that these are well-defined and have a value.

\subsection{MML}\label{SS:smml}

Minimum Message Length (MML) \citep{Wallace2005}
is an inference method that attempts to codify in information-theoretic terms
the principle of Occam's Razor.

Consider $F:X\to\Theta$ as a candidate point estimation function. To
evaluate the suitability of $F$, consider
an observer wanting to communicate $x$. Such an observer may do so
using a two-part message,
first communicating $F(x)$ as an estimate to the value of $\theta$, 
and then communicating $x$ on the assumption that $\theta$ equals the
communicated $F(x)$. The first part of the message is, in expectation,
shorter the ``simpler'' $F$ is (in the sense of having lower entropy),
the second part is, in expectation, shorter the more representative $F$'s
estimations. The $F$
with the shortest total expected message length is therefore, under this model,
the one best fitting Occam's ideal of choosing the simplest hypothesis
that still adequately fits the available data.

When $X$ has only countably many elements, an $F$ can be chosen such that
both message parts are finite. In this case, $F$ necessarily maps to only
countably many distinct $\theta$ values.
We label these $\theta_1,\theta_2,\ldots$.

The expected length of the first part of the message is
\begin{align*}
L_E(F)&\defeq \sum_i -\text{Prob}(F(\mathbf{x})=\theta_i)\log \text{Prob}(F(\mathbf{x})=\theta_i) \\
&= H(F(\mathbf{x})),
\end{align*}
where $H$ is the Shannon entropy.\footnote{Here and everywhere in the paper,
all logs are natural logs, and information is measured in nits.}

Consider $\mathbf{i}$, the random variable such that
$F(\mathbf{x})=\theta_\mathbf{i}$.
The expected length of the second part of the message is the expectation,
over $i=\mathbf{i}$, of the message length of an $x$ value taken from the
distribution of $\mathbf{x}$
given $F(\mathbf{x})=\theta_i$, when $x$ is encoded optimally for its
distribution under $\boldsymbol{\theta}=\theta_i$, i.e.\ the cross entropy of
these two distributions. In a formula, this is
\begin{align}
\sum_i \text{Prob}(F(\mathbf{x})=\theta_i) \mathbf{E}_{\mathbf{x}|F(\mathbf{x})=\theta_i}[-\log \text{Prob}(\mathbf{x}|\boldsymbol{\theta}=\theta_i)] \nonumber\\
=\sum_{x\in X} \text{Prob}(\mathbf{x}=x)\log\left(\frac{1}{\text{Prob}(\mathbf{x}=x|\boldsymbol{\theta}=F(x))}\right),\label{Eq:finite}
\end{align}
where $\mathbf{E}$ signifies expectation.

Unfortunately, this method does not work when $X$ is uncountable, because then
the second part of the message becomes infinite and cannot be minimised. The
way to solve this problem is to subtract from \eqref{Eq:finite} the prior
entropy of $\mathbf{x}$, $H(\mathbf{x})$, which is clearly independent of the
choice of $F$. The remainder is known as the \emph{excess message
length}.
This is guaranteed to be finite, even when $X$ is uncountable.

Returning now to our notation for continuous variables, if we define
\begin{equation}\label{Eq:Rdef}
R_{\theta}(x)\defeq\log\left(\frac{r(x)}{f(x|\theta)}\right),
\end{equation}
then an equivalent way to express the expected excess message length for the
second message part is
\[
L_P(F)\defeq \mathbf{E}_{\mathbf{x}}(R_{F(\mathbf{x})}(\mathbf{x}))=\int_X r(x) R_{F(x)}(x)\text{d}x,
\]
and by minimising the total,
\[
L(F)\defeq L_E(F)+L_P(F),
\]
the optimal $F$ can be found.

Necessarily, the optimal $F$ must have a finite $H(F(\mathbf{x}))$. It
must therefore map to only a countable subset of $\Theta$.
Functions $F:X\to\Theta$ that take only countably many values are known in the
MML literature as \emph{code-books}. The code-books that minimise $L$ are the
\emph{SMML code-books}, and are traditionally taken to provide the SMML
estimate.\footnote{Some sources define ``code-book'' as merely the collection
of $\theta$ values taken by $F$, rather than as the function $F$ itself, but
this is not the sense in which we use the term here.}

Note, however, that SMML code-books may not be unique, and two equally optimal
$F$ functions may lead to distinct estimates. To resolve this, we define the
SMML estimator here more generally as
\[
\hat{\theta}_\text{SMML}(x)=\textit{closure}\left(\bigcup_{F\in\argmin_{F'} L(F')} F(x)\right),
\]
where $\textit{closure}(\cdot)$ is the set closure function.

\subsection{The Ideal Point}\label{SS:ip}

We introduce the notion of an ``ideal point'' which will be central to our
analysis. This is built on an approximation for SMML known in the MML
literature as Ideal Group \citep{Wallace2005}.

The Ideal Group estimator is defined in terms of its functional inverse,
mapping $\theta$ values to (sets of) $x$ values. We refer to such functions
as \emph{reverse estimators} and denote them $\tilde{x}(\theta)$.
The Ideal Group reverse estimator is defined as
\begin{equation}\label{Eq:IG}
\tilde{x}_{\text{IG}}(\theta)\defeq \{x\in X|R_{\theta}(x)\le t(\theta)\},
\end{equation}
where $t(\theta)$ is a threshold whose value is given in \citet{Wallace2005},
and which is computed in a way that guarantees that
the ideal group is a non-empty set for each $\theta\in\Theta$.

Because the ideal group is always non-empty, it must include
\[
\tilde{x}_{\text{IP}}(\theta)\defeq\argmin_{x\in X} R_{\theta}(x).
\]
We refer to this as the \emph{Ideal Point} approximation (a notion and a name
that, unlike Ideal Group, are new to this paper).

We denote the inverse functions of reverse estimators, e.g.
\[
\hat{\theta}_{\text{IP}}(x)\defeq\{\theta\in\Theta|x\in\tilde{x}_{\text{IP}}(\theta)\},
\]
by the same hat notation as estimators,
but stress that these are only true estimators (albeit, perhaps, multi-valued)
if the reverse estimator is a surjection.

To motivate the ideal point estimator independently of the ideal group,
consider that $R_\theta(x)$ is
a representation-invariant value that measures the joint probability density
of $(\boldsymbol{x},\boldsymbol{\theta})$ at $(x,\theta)$, as normalised by
the individual marginal densities of $\boldsymbol{x}$ at $x$
and $\boldsymbol{\theta}$ at $\theta$. (One can think of it as the negative
of the portion of the mutual information of $\boldsymbol{x}$ and
$\boldsymbol{\theta}$, $I(\boldsymbol{x};\boldsymbol{\theta})$, relating
to their distributions at $\boldsymbol{x}=x$ and $\boldsymbol{\theta}=\theta$.)
The
reverse estimator $\tilde{x}_{\text{IP}}(\theta)$ is therefore the estimator
that chooses for any given $\theta$ the $x$ values that are, in this
sense, most ``representative''---or, otherwise stated, are
``prototypical examples''---of $\theta$; they are the $x$ values that
would have maximised the likelihood function $f(x|\theta)$ had $\boldsymbol{x}$
been reparameterised so that its marginal distribution were to become uniform.

The (forward) estimator
$\hat{\theta}_{\text{IP}}(x)$ can by this be understood to select for any $x$
the $\theta$ value(s) for which this $x$ is a prototypical example.

As the observation space is typically much larger (has higher dimensionality)
than the parameter space, it is usually the case that typical $\theta$ values
map to many $x$ values that can serve as prototypical examples of them.
However, optimally we would hope the inverse relationship, being the IP
estimator, is a functional relationship, mapping each example back to a single
antecedent: optimally, an observation is prototypical of exactly one
choice of parameter values. When this is the case, the IP estimator is,
at least at face value, an appealing choice for an
estimator. Conversely, in cases where the relationship is more complicated
we might consider the estimator less justified, and the farther
from functional the relationship is the weaker the justification.

\section{Context, historical background, and aims}\label{S:context}

Before delving into proofs, it is illuminating to place our present
results within their proper context.

\subsection{The operational significance of MML}

In classical (frequentist) statistics, a key tenet is the
likelihood principle \citep{birnbaum1962foundations}. This idea, whose
origins go back as far as \citet{fisher1922mathematical}, states that all
evidence relevant for the modelling of parameters from observations is
contained in the likelihood function. This is the philosophical justification
behind likelihood maximisation.

In practice, however, when comparing between qualitatively different models,
such as between models whose number of degrees of freedom are different,
relying solely on likelihood maximisation is biased towards the more expressive
models, leading to overfitting. This gives rise to multiple types of
penalised maximum likelihood, some, based on likelihood-regularisation,
of the form
\[
\hat{\theta}(x)=\argmax_{\theta\in\Theta} \left[f(x|\theta)-\lambda(\theta)\right],
\]
and some, based on information theory, of the form
\begin{equation}\label{Eq:info_theory}
\hat{\theta}(x)=\argmin_{\theta\in\Theta} \left[-\log f(x|\theta)+\lambda(\theta)\right],
\end{equation}
where different methods use different $\lambda(\theta)$ functions to encode
a penalty for the choice of $\theta$.

This method has proved itself useful in regularisation
\citep{tikhonov2013numerical,hoerl1970ridge}, variable selection
\citep{tibshirani1997lasso}, model selection
\citep{Akaike1973, schwarz1978estimating} and elsewhere.

While all information-based methods rely on
the principles of algorithmic and statistical information theory,
in some cases, such as with the Akaike Information Criterion \citep{Akaike1973}
and the Bayes Information Criterion \citep{schwarz1978estimating}, the choice of
$\lambda(\theta)$ function is a generic one, based on generic assumptions,
whereas in others, such as in Solomonoff induction \citep{solomonoff1964formal}
and MDL \citep{rissanen1989stochastic}, the idea is to use the full problem
description to create a specific $\lambda(\theta)$ function. The
latter outlook lends itself naturally to
describing \eqref{Eq:info_theory} as representing the length of a
two-part message with an explicit structure.

MML, while structurally similar to these methods in advocating for a
two-part message whose optimality is rooted in statistical information theory,
is nevertheless distinct in its approach and aims. While the methods above
can be categorised as either frequentist (make no use of a prior) or
objective Bayesian (make use of a prior that is universal, or otherwise is
systematically-derived as a function of the likelihood function), MML is
purely (subjective) Bayesian, relying on a user-provided prior, and rejects
the likelihood principle entirely. Unlike MDL, which can be viewed as
related to this family of penalised maximum likelihood estimators, and is
therefore of importance mainly in model selection and similar tasks
fitting for this family (as can be seen, e.g., in how it is framed by its
own inventor in \citet{rissanen2007information}), MML is painted by its
originator in a vastly different light, namely as the one, singularly best
systematic method for inductive inference (as evidenced in
\citet{Wallace2005}), and as a do-all-end-all solution.

Accordingly, while MDL has progressed and evolved in order to
address new problem fields and problem types (for example, from the
``old MDL'' of \citet{rissanen1989stochastic} to the newer MDL and the
Normalised Maximum Likelihood of \citet{rissanen2007information}), MML has
not, with the minimum message length principle retaining precisely its
original form since its inception in 1968 \citep{WallaceBoulton1968} and
SMML since its introduction in \citet{wallace1975invariant}.

The MML literature, at large, has followed in this spirit, and much of it
revolves pitting MML in specific tasks against methods such as the ones
listed above, as well as against ML, MAP and other alternatives, demonstrating
MML's distinctive power as a single, unified framework within which to make all
types of statistical decisions.

Examples of such use begin already at MML's inception in
\citet{WallaceBoulton1968}, where MML was used for non-parametric clustering
and unsupervised, non-parametric mixed-modelling,
producing an algorithm that can handle seamlessly both discrete and
continuous attributes, and where the minimum message length criterion is
used to determine how finely one can confidently separate clusters. The
program developed implementing this algorithm, SNOB, is still in use
\citep{dale2016environmental, allison2018coding}, despite now being over 50
years old.

Elsewhere, MML was used to tackle the problem of learning from small
samples, and in particular short time series \citep{schmidt2016minimum},
to handle spike-and-slab priors \citep{xu2017bayesian} (as well as
generalisations thereof, such as the gap problem
of \citet{DoweGardnerOppy2007}), in situations where
the topology of observation- or parameter-space is not trivial (such as
when using phase parameters
\citep{WallaceDowe2000,kasarapu2015minimum,dowe1996mml} or
when the observation- and parameter-spaces have a complex internal structure
\citep{ComleyDowe2005}), for single and multiple factor analysis
\cite{EdwardsDowe1998,wallace1995multiple} and more.

In practical settings, too, MML has been used in a wide variety of domains,
demonstrating empirically its versatility and strength, with recent
examples including
\citet{zamzami2018mml,li2018subspace,channoufi2018color,subramanian2017statistical,konagurthu2019information}.

Throughout, MML was used in its original formulation (up to approximations
that allow its practical computation), cementing within the MML community
its reputation as a statistical Swiss army knife for all occasions.

\subsection{Importance of the consistency question}

Consistency was first introduced by Fisher \citep{fisher1922mathematical},
though its meaning has since evolved \citep{Gerow1989}.
While it was arguably a property of only limited interest upon its initial
introduction, in the modern world of Big Data its impact has increased
exponentially: as part of the present standard practice of machine learning,
we bombard our algorithms with as much data as we can possibly give them,
often at extremely high costs relating to data collection, management,
storage, cleansing, safe-keeping, matching, re-formatting and governance.
Accordingly, it has now become
imperative to ascertain whether the algorithms we use
can make good use of this additional data, or whether their learning curve
will at some point level off. For an inconsistent algorithm, we would like
to know when such saturation occurs, whereas for a consistent algorithm, we
would like to know how efficient it is, i.e.\ how quickly it converges with
data. In both cases, these answers determine how much data can be used
effectively, providing a bound for our collection needs.

Determining whether MML is consistent for the Neyman-Scott scenario,
specifically, is important for two reasons.

Firstly,
the importance of the Neyman-Scott problem stems simply from the fact that
it is a commonly-encountered statistical situation for which solutions are
required in practice. It is the perhaps-simplest statistical example
of a wider class of problems in which additional parameters are added as
more observations become available, a situation that is common and of
practical importance. For example, it is encountered regularly in the
analysis of panel data \citep{baltagi2008econometric}, where it can appear
due to the presence of individual effects, confounders and intermediate
variables. The general problem has been given many names such as
the problem of nuisance variables or the problem of incidental parameters.
Basu \citep{basu2011elimination} describes it as ``The big
question in statistics'', remarking that ``During the past seven decades
an astonishingly large amount of effort and ingenuity has gone into the
search for reasonable answers to this question''.

Second, analysis of the Neyman-Scott scenario is important specifically in
the context of MML because of how it was handled elsewhere in statistics.
Lancaster \citep{lancaster2000incidental}, which provides a full review
of the topic, summarises the state of the art by stating that solution
approaches ``are advanced on a case by case basis, typically these involve
differencing, or conditioning, or use of instrumental variables''.

From an MML perspective, such case-by-case solutions are inadequate,
in that MML purports to be a single,
systematic method that should be able to handle all statistical situations
without requiring any ``tweaking'' for special cases of interest.
Thus, the question of whether Strict MML, in its plain-vanilla form, is
robust enough to handle a Neyman-Scott scenario becomes of interest.

\subsection{Is consistency for Neyman-Scott feasible?}

Maximum likelihood has known good consistency properties
\citep{Doob1934,Wald1949,Perlman1972,SeoLindsay2013}, but they are not
absolute. In cases where ML is not consistent, surely, it would be difficult
for penalised ML methods to guarantee consistency. However, the same
logic cannot be applied to methods that use additional information, e.g.\ in
the form of a Bayesian prior. As an example, in the context of proper estimation
problems with a discrete parameter space, MAP has been shown to be consistent
for every estimation problem that has a consistent posterior, which, in turn,
is every estimation problem for which any consistent estimator exists, and
is, by this, therefore strictly superior to ML in its consistency properties
over this domain \citep{Hendrey2019hierarchy}.\footnote{When discussing
estimation problems with a discrete parameter space, MAP, or
\emph{discrete MAP}, is taken to be the estimator maximising the posterior
\emph{probability} for $\boldsymbol{\theta}=\hat{\theta}_{\text{MAP}}(x)$. This
is in contrast to the more popular use of the term MAP, which is used in the
context of continuous estimation problems (estimation problems with a
continuous hypothesis space) to indicate the estimator
maximising the posterior \emph{probability density},
$f(\theta|x)$.}

The Neyman-Scott problem, being neither discrete nor (in the cases discussed
here) with a proper prior, and having, additionally, an inconsistent
posterior, does not guarantee MAP's consistency, and, in fact, MAP returns for
it estimates that are even more biased than those of ML. This, however,
does not mean that consistent estimation of $\sigma$ is not possible
in this case.

While in classical, frequentist statistics the likelihood principle
suggests that no estimation method can do better than
(penalised) maximum likelihood, in Bayesian statistics no equivalent principle
exists. Different methods, operating on different principles, are free to
use the prior information in different ways and each method may have strengths
and weaknesses related to its choices, in a way that makes them in general
incomparable. Consider, for example, the estimator returning the posterior
expectation of $\theta$,
\[
\hat{\theta}_{\text{PE}}(x)=\mathbf{E}[\boldsymbol{\theta}|\boldsymbol{x}=x].
\]
This estimator does not have, in general, strong consistency properties. It
can be inconsistent even on proper, discrete estimation problems with
independent, identically distributed (i.i.d.) observables, even in cases where
the posterior is consistent and
both MAP and ML return consistent estimates.\footnote{As an example, consider
the case where for any natural $k$ we have $\theta=k!$ as a parameter choice,
with prior probability $h(\theta)=1/2^k$ and the distribution of each
observation given $\theta$ being independent and uniform over $\{1,\ldots,k\}$.}
Nevertheless, it displays remarkable consistency
properties in Neyman-Scott-like situations. For the Neyman-Scott problem itself,
posterior expectation is a consistent estimator under all priors that are
not pathological (in the sense defined in Section~\ref{SS:notations}).

Thus, because MML is distinct from other popular information-based estimation
methods in that it is wholly (subjective) Bayesian, it is not, itself, an
instance of penalised maximum likelihood. Any consistency or inconsistency
results regarding maximum likelihood cannot, therefore, be used to infer
directly regarding MML's consistency. The only way to determine the
consistency of MML is to consider its properties specifically, and there is
no reason to assume, without such MML-specific evidence, whether it 
is likely to be consistent on a problem such as Neyman-Scott, or, in fact,
in general.

\subsection{The history of MML's consistency analysis}

Surprisingly, perhaps, SMML was not initially introduced as part of
statistical information theory. Its introduction, in 1975
\citep{wallace1975invariant}, was as an attempt to
create a new estimator, primarily for estimation problems with a
continuous hypothesis space, that
will extend the good properties exhibited by MAP in the discrete domain also
to the continuous domain. The typical continuous-domain analogoue to discrete
MAP, i.e.\ the maximisation of the posterior probability density,
was deemed unsatisfactory in this context, because it leads to
estimations that are not invariant to the representation of the parameter
space, which loses some of discrete MAP's key properties.

Two properties that are explicitly mentioned in \citet{wallace1975invariant}
as part of this
intended goal, which are purposefully preserved by SMML, are its Bayesian
nature and its invariance to representation.
However, implicitly, the hope was to preserve as
many other of MAP's good properties as well. In particular, as discussed, MAP
in the discrete space has optimal consistency properties, and it was therefore
reasonable to intuitively expect (or at least hope for) such properties to
extend also to SMML in the continuous domain.

Wallace and Boulton conclude \citet{wallace1975invariant}
by demonstrating that SMML, as derived, can be
equivalently---and more elegantly---described in information-theoretic
terms, thus linking SMML to the statistical information theory literature,
where it has resided ever since. Over the following years additional
results, such as Solomonoff's \citep{Solomonoff1978} proofs of convergence,
provided additional evidence regarding the consistency powers of
the wider family of information-theoretic methods, and by this also
indirectly regarding MML.

The most persuasive affirmation of this fact came in \citet{BarronCover1991},
where it was proved regarding general information-theoretic estimation methods,
including Solomonoff induction, MDL, MML and others, that they satisfy certain
general consistency properties simply by virtue of their structure.

The question remained, however, whether in the ostensibly more difficult case
of a Neyman-Scott scenario, not covered by the proofs of
\citet{BarronCover1991},
it would still be the case that Strict MML retains its consistency properties.

The intuition that it should is one that can equally be applied to the entire
two-part-message information-theoretic estimator family. In all cases, the
idea is that any information about the observation, $x$, that can be
meaningfully
formulated as a pattern is encoded in the first part of the message, and
therefore reflected by the choice of $\hat{\theta}(x)$, whereas the second
part of the message encodes the noise. Given enough observations, it is
argued (e.g., in \citet{DoweGardnerOppy2007}),
any pattern that can be discerned will ultimately
be captured by the first part of the message and therefore reflected in
$\hat{\theta}(x)$, ensuring its consistency.

Specifically, one can think of this two-part message structure as a built-in
protection against over-fitting, with the method continuously weighing how
much of the information it receives is salient and how much is not. The
Neyman-Scott problem is, in this context, an optimal test to showcase the
powers of MML, in that at its core it is a problem regarding over-fitting:
at any given point, the uncertainty regarding $\mu$ is high, but
point estimation methods following a maximum likelihood principle will
choose to estimate its value as equal to $m$; this is an over-fit for $\mu$
that results in an underestimated $\sigma$. If MML's protection against
over-fitting is absolute however, it can be argued, such biases should not
occur in its estimations.

When \cite{dowe1997resolving} and \cite{Wallace2005} showed theoretically
regarding two of SMML's
approximations that they are consistent for a certain Neyman-Scott problem,
and when \cite{dowe1996mml} and \cite{WallaceDowe1993} further showed
empirically that WF-MML converges
properly, while other estimation methods do not, in other Neyman-Scott-like
scenarios, this provided significant theoretical and empirical validation for
the overarching theory regarding MML's capabilities. Because SMML is NP-hard
in general, and no efficient algorithm to compute it is known
even in the simplest of multidimensional cases\footnote{In
\citet{farrwallace2002}, for example, the problem of determining the SMML
estimate to the parameters
of a trinomial distribution is investigated; the authors write that they
have not found any polynomial-time algorithm for this problem, have no
non-trivial bounds on its complexity, and suspect it to be
NP-hard.}, such validation
was the best that MML researchers could hope for, and the results were
therefore understandably accepted at face value.

Over the years since these results began to emerge, MML's consistency
properties were, consequently, repeatedly cited without any form of caveat,
and were firmly believed among the MML community. A particularly striking
example of this is \citet{dowe2011mml}, where Dowe observes
that while some estimation methods, like posterior expectation, provide
consistency in Neyman-Scott-like scenarios, and while other methods, like
maximum likelihood, provide an estimate that is invariant to the representation
of both parameter space and observation space, no method other than MML
is known to
provide both absolute consistency and such invariance. Dowe conjectured that
only MML and very closely-related Bayesian methods are in general both
statistically consistent and invariant, adding a back-up conjecture that
if there are (hypothetically) any such non-Bayesian methods, they will be
far less efficient than MML.
SMML's own consistency was, throughout, never in question, and this reflects
the views of the MML community at large.

It is in light of these facts that proving that SMML is, in fact, not
consistent even for the original and much-showcased Neyman-Scott problem
becomes highly significant: it changes our understanding of MML's place in
statistical theory, proving that it is not the universally applicable (albeit
computationally intractable) statistical tool that it was thought to be, and
that it does require case-specific tweaking in order to handle real-world
scenarios of interest, even under conditions of unlimited data.

This new understanding regarding the true powers and the true limitations of
SMML opens up entirely new research questions, or, more precisely, reawakens
old research questions that have remained untouched since the introduction of
the minimum message length principle over 50 years ago, and which can now
once again be investigated.

For example, there is no known result, not even among the results of
\citet{BarronCover1991}, that proves that SMML is consistent even in the
ostensibly simplest case of proper, discrete estimation problems with a
consistent posterior, even where both MAP and maximum likelihood are consistent.
Determining the answer to this question, one way or another, would be a
significant result for MML theory.

\subsection{Aims of this paper}

Despite the fact that this paper's main technical proof is regarding MML's
consistency for the Neyman-Scott problem, and despite the fact that the
wider context was presented, correspondingly, along these lines, it should
be noted that neither the paper's main contribution nor its main aim has
to do with either the Neyman-Scott problem or consistency.

To explain, consider this.

Due to its unique theoretical underpinnings, Strict MML
is widely believed to hold many good properties that set it apart from
other point estimation methods, and much of the MML literature, both
theoretical and experimental, deals with comparing MML with other, more
commonly-used solutions to the same problems, and reporting on MML's
advantages.

Unfortunately, because MML is almost invariably computationally intractable,
papers demonstrating SMML’s good properties typically do so by relying on
approximations, additional assumptions and analysis of specific, narrow cases.

Where these demonstrations are successful, they can provide further evidence
for SMML's power, but not conclusive proofs, as the additional assumptions
remain suspect. Where they are unsuccessful, they cannot prove any deficiency
in SMML, as the problem may well be in the approximations used.
Thus, the inability to generate exact SMML solutions has been a great
hindrance for the study of the true properties and power of MML.

This paper's main contribution is that it develops, for the first time,
means to analyse SMML directly, without approximations or assumptions, on a
wide range of problems, including high-dimensional natural ones.

Some of our conclusions, such as Lemma~\ref{L:Vmax}, prove certain basic
good properties regarding SMML estimates. However, by and large the methods
we use define problem scenarios in which SMML coincides with more standard
estimation methods, particularly maximum likelihood estimation. As such,
these methods are not usable to demonstrate SMML's edge over competing
methods.

Where our main aim lies, and is the main impact of our contribution, is in
providing systematic new tools that can, at least, provide \emph{negative}
results regarding the power of MML: where prior methods have used assumptions
and approximations to differentiate MML from its competitors, and by this
provided evidence (but not proof) of MML's power, we can test these claims
rigorously, and potentially negate them, if we find a problem satisfying our
required conditions within the domain regarding which MML's superiority
is claimed, because for this domain we know that no such differentiation
exists.

Our result regarding MML's lack of consistency in the Neyman-Scott scenario,
however central and influential for MML, is in this light merely a first
demonstration of the power of this new, general tool.

\section{SMML analysis}\label{S:smml}

In this section we describe our SMML analysis methods.

Because of SMML's inherently NP-hard nature, it is not possible to describe
exact SMML solutions
for arbitrary estimation problems, and so, instead, we focus on defining
properties of estimation problems, such that the SMML solution for problems
exhibiting these properties can be analysed exactly. These properties,
defined below, include \emph{transitivity}, \emph{homogeneity},
\emph{concentration} and \emph{locality}
Our various theorems require estimation problems to exhibit
various subsets of these. (The intersection of all properties required for
all our results we've named \emph{regularity}.)
Thus, these properties have no operational
justification of their own, nor can they be individually motivated. Their
definitions relate only to what is needed for the sake of the various proofs.

Within a Bayesian point estimation setting, estimation problems, including
their priors, are considered inputs. They are given, and therefore do not
need to be motivated. The properties listed above should therefore be taken
as descriptive, not prescriptive.

Nevertheless, our aim was to find properties that are exhibited by a wide
range of interesting and/or natural problems, and so the main criterion for
a good property was that it should exclude as few of these as possible.

Some of our properties (concentration, locality) attain these ideals directly,
by only excluding problems exhibiting certain pathological behaviours.
The definition of locality, for example, does not exclude any proper
estimation problem at all. Others (transitivity, homogeneity) rely on internal
symmetries within the estimation problem. They exclude more. However, because
of the aesthetics of such symmetries, it is still the case that many interesting
and relevant problems satisfy them.

We begin by defining these properties.




\subsection{Some special types of inference problems}\label{SS:types}

The first few properties we require of estimation problems
describe symmetries, i.e.\ automorphisms, which we will exploit in
constructing the SMML solution.

\begin{defi}\label{D:transitivity}
An \emph{automorphism} for
an estimation problem $(\mathbf{x},\boldsymbol{\theta})$, with
$\mathbf{x}\in X$ and $\boldsymbol{\theta}\in\Theta$, is a pair $(U,T)$ of
diffeomorphisms, $U:X\to X$ and $T:\Theta\to\Theta$, such that
\begin{enumerate}
\item For every $A\subseteq X$,
\begin{equation}\label{Eq:automorphism_marginal}
\text{ScaledProb}(\mathbf{x}\in A)=\text{ScaledProb}(\mathbf{x}\in U(A)),
\end{equation}
and
\item For every $A\subseteq X$ and every $\theta$,
\begin{equation}\label{Eq:automorphism_likelihood}
\text{Prob}(\mathbf{x}\in A|\theta)=\text{Prob}(\mathbf{x}\in U(A)|T(\theta)),
\end{equation}
\end{enumerate}
where $U(A)=\{U(y)|y\in A\}$.

Note that we assume that $U$ and $T$ are such
that the Jacobians of these bijections, $\frac{\text{d}U(x)}{\text{d}x}$ and
$\frac{\text{d}T(\theta)}{\text{d}\theta}$, are defined everywhere, and their
determinants, $\left|\frac{\text{d}U(x)}{\text{d}x}\right|$ and
$\left|\frac{\text{d}T(\theta)}{\text{d}\theta}\right|$, are positive
everywhere. This allows us to restate condition
\eqref{Eq:automorphism_marginal} as
\[
r(x)=r(U(x))\left|\frac{\text{d}U(x)}{\text{d}x}\right|,
\]
and condition \eqref{Eq:automorphism_likelihood} as
\[
f(x|\theta)=f(U(x)|T(\theta))\left|\frac{\text{d}U(x)}{\text{d}x}\right|.
\]

An estimation problem will be called \emph{observation transitive}
if for every $x_1,x_2 \in X$ there is an automorphism $(U,T)$
for which $U(x_1)=x_2$.

An estimation problem will be called \emph{parameter transitive}
if for every $\theta_1,\theta_2 \in \Theta$ there is an automorphism $(U,T)$
for which $T(\theta_1)=\theta_2$.

An estimation problem will be called \emph{transitive} if it is both
observation transitive and parameter transitive.
\end{defi}

Here, we borrow the term ``transitivity'' from graph theory, where it is
used, in the context of vertex-transitive and edge-transitive graphs, to
describe graph properties defined by analogous automorphisms.

\begin{lemma}\label{L:NStransitive}
The scale free Neyman-Scott problem with fixed $N$ and $J$ and with
observable parameters $(s,m)$ is transitive.
\end{lemma}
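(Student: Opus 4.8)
The plan is to exhibit an explicit group of automorphisms, generated by coordinate scalings and translations, and to show it acts transitively on both the observation space $X=\{(s,m):s>0,\ m\in\setR^N\}$ and the parameter space $\Theta=\{(\sigma,\mu):\sigma>0,\ \mu\in\setR^N\}$. First I would record the likelihood in the sufficient-statistic space: since $m_n\sim N(\mu_n,\sigma^2/J)$ independently and $NJs^2/\sigma^2\sim\chi^2_{N(J-1)}$ independently of $m$, the density factors as $f(s,m\mid\sigma,\mu)=\sigma^{-(N+1)}\phi(s/\sigma)\,\psi\big((m-\mu)/\sigma\big)$ for fixed functions $\phi,\psi$. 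This manifestly scale- and translation-covariant form is the engine of the whole argument.

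Next I would define, for each $\gamma>0$ and $v\in\setR^N$, the pair $U(s,m)=(\gamma s,\gamma m+v)$ and $T(\sigma,\mu)=(\gamma\sigma,\gamma\mu+v)$, each a continuous bijection of $X$ and $\Theta$ respectively, with constant Jacobian determinant $\gamma^{N+1}$. Verifying \eqref{Eq:automorphism_likelihood} in its restated form $f(x\mid\theta)=f(U(x)\mid T(\theta))\left|\frac{\text{d}U(x)}{\text{d}x}\right|$ is then immediate from the displayed factorization: the arguments $s/\sigma$ and $(m-\mu)/\sigma$ are invariant under $(U,T)$, the prefactor picks up $\gamma^{-(N+1)}$, and this is exactly cancelled by the Jacobian $\gamma^{N+1}$.

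Then I would verify the marginal condition \eqref{Eq:automorphism_marginal}. Integrating $\mu$ out of $f$ against the (uniform-in-$\mu$) scale free prior leaves a normalized Gaussian, so $r(s,m)$ depends on $s$ alone; carrying out the remaining $\sigma$-integral against $\sigma^{-(N+1)}$ yields $r(s,m)\propto s^{-(N+1)}$. Hence $r(U(s,m))\left|\frac{\text{d}U(x)}{\text{d}x}\right|=\gamma^{-(N+1)}r(s,m)\cdot\gamma^{N+1}=r(s,m)$, while translations leave both $r$ and the unit Jacobian untouched. I expect this step to be the main obstacle, and it is precisely the reason the lemma is stated for the observable parameters $(s,m)$: only in the $(N+1)$-dimensional sufficient-statistic space does the marginal scale as $s^{-(N+1)}$, exactly matching the Jacobian; in the native $NJ$-dimensional space the exponents would not agree and scaling would fail to be an automorphism.

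Finally, transitivity follows by solving for the group parameters. Given $(s_1,m_1),(s_2,m_2)\in X$, the choice $\gamma=s_2/s_1>0$ and $v=m_2-\gamma m_1$ gives $U(s_1,m_1)=(s_2,m_2)$, establishing observation transitivity; given $(\sigma_1,\mu_1),(\sigma_2,\mu_2)\in\Theta$, the choice $\gamma=\sigma_2/\sigma_1$ and $v=\mu_2-\gamma\mu_1$ gives $T(\sigma_1,\mu_1)=(\sigma_2,\mu_2)$, establishing parameter transitivity. Since both coordinates of every automorphism share the same $(\gamma,v)$ and both $X$ and $\Theta$ are full $(\gamma,v)$-orbits, the problem is transitive.
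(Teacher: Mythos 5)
Your proposal is correct and takes essentially the same approach as the paper: the paper's proof likewise exhibits the scale-and-translate family $U(s,m)=(\alpha s,\alpha m+\beta)$, $T(\sigma,\mu)=(\alpha\sigma,\alpha\mu+\beta)$ and solves for the group parameters, declaring the automorphism property ``straightforward to verify.'' Your explicit verification---the factorization $f(s,m\mid\sigma,\mu)=\sigma^{-(N+1)}\phi(s/\sigma)\,\psi\bigl((m-\mu)/\sigma\bigr)$ and the computation $r(s,m)\propto s^{-(N+1)}$, whose exponent exactly cancels the Jacobian $\gamma^{N+1}$---simply fills in the details the paper omits.
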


\begin{proof}
Consider $U(s,m)=(\alpha s,\alpha m+\Delta)$ and
$T(\sigma,\mu)=(\alpha \sigma,\alpha \mu+\Delta)$ with $\alpha> 0$.
It is straightforward to verify that $(U,T)$ is an automorphism.
Furthermore, for any $(s,m)$ and $(s',m')$
it is straightforward to find parameters $\alpha$ and $\Delta$ that would
map $(s,m)$ to $(s',m')$, and similarly for $(\sigma,\mu)$ and
$(\sigma',\mu')$.\footnote{We refer to estimation problems as \emph{scale free}
if they admit an
automorphism $(U,T)$ such that $U(x)=\alpha x$, i.e.\ if  the description of
the problem's likelihoods and prior would not have changed if all $x$ values
had been given in different units of scale, assuming that we make a
corresponding change also to the representation of $\boldsymbol{\theta}$.
As can be seen, the scale-free Neyman-Scott problem
with observables $(x_{nj})$ admits such an automorphism, and hence its name.

This, however, is not a property we will use in this paper, and its definition
appears here solely to explain the naming choice for the
scale-free Neyman-Scott problem and its prior.}
\end{proof}

Transitivity is a strong condition on the symmetry of a problem: any two
possible observables must be symmetric to each other (i.e., have automorphisms
mapping one to the other) and any two parameter choices must be symmetric to
each other. We will, for the most part, only require weaker conditions,
relating to all parameter choices/observations being in some sense ``similar''
to each other specifically with respect to the $R_{\theta}(x)$ function.

Formally, we define the weaker conditions that we need as follows.

\begin{defi}\label{D:homogeneous}
Let
\[
R^*_\theta\defeq\min_{x\in X} R_{\theta}(x).
\]

An estimation problem $(\mathbf{x},\boldsymbol{\theta})$, with
$\mathbf{x}\in X$ and $\boldsymbol{\theta}\in\Theta$, will be called
\emph{parameter-homogeneous} if the value of $R^*_\theta$
is a constant, $R^*$, for all $\theta\in\Theta$.

Let
\[
R_{\textit{opt}}(x)\defeq\min_{\theta\in \Theta} R_{\theta}(x).
\]

An estimation problem $(\mathbf{x},\boldsymbol{\theta})$, with
$\mathbf{x}\in X$ and $\boldsymbol{\theta}\in\Theta$, will be called
\emph{observation-homogeneous} if the value of $R_{\textit{opt}}(x)$
is a constant, $R_{\textit{opt}}$, for all $x \in X$.

An estimation problem will be called \emph{homogeneous} if it is both
parameter-homogeneous and observation-homogeneous.
\end{defi}

\begin{lemma}\label{L:homogenous}
Every parameter-transitive estimation problem is parameter-homogeneous.

More generally, for any $\theta_1, \theta_2\in\Theta$, if there exists an
automorphism $(U,T)$ such that $T(\theta_1)=\theta_2$, then
$R^*_{\theta_1}=R^*_{\theta_2}$.
\end{lemma}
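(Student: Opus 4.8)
The plan is to prove the more general (second) statement first and then obtain homogeneity as an immediate corollary. The heart of the matter is a single invariance identity: given an automorphism $(U,T)$, I claim that $R_{T(\theta)}(U(x)) = R_\theta(x)$ for every $x$ and $\theta$. Everything else follows from this together with the bijectivity of $U$.

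To establish the identity, I would start from the two restated automorphism conditions supplied in Definition~\ref{D:transitivity}, namely $r(x)=r(U(x))\left|\frac{\text{d}U(x)}{\text{d}x}\right|$ and $f(x|\theta)=f(U(x)|T(\theta))\left|\frac{\text{d}U(x)}{\text{d}x}\right|$. Solving each for its value at $U(x)$ and substituting into the definition $R_{T(\theta)}(U(x))=\log\!\big(r(U(x))/f(U(x)|T(\theta))\big)$, the common Jacobian determinant factor $\left|\frac{\text{d}U(x)}{\text{d}x}\right|$ cancels between numerator and denominator, leaving exactly $\log\!\big(r(x)/f(x|\theta)\big)=R_\theta(x)$.

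With the identity in hand, I would take the minimum over the observation argument. Because $U:X\to X$ is a bijection, the value $U(x)$ ranges over all of $X$ as $x$ does, so minimizing $R_{T(\theta)}$ over its argument is the same whether that argument is written as $y$ or as $U(x)$. Hence $R^*_{T(\theta)}=\min_{x} R_{T(\theta)}(U(x))=\min_x R_\theta(x)=R^*_\theta$. Specializing to a pair $\theta_1,\theta_2$ with $T(\theta_1)=\theta_2$ yields $R^*_{\theta_1}=R^*_{\theta_2}$, which is the general claim. The first statement is then immediate: parameter transitivity guarantees such an automorphism for every pair $\theta_1,\theta_2$, so $R^*_\theta$ takes a common value at all $\theta$ and is therefore a constant $R^*$.

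The argument is short, and the only real subtlety is the cancellation of the Jacobian factors, which relies on the assumption built into the definition of an automorphism that the two conditions hold in their differential, density-level form with positive Jacobian determinants; this is precisely what lets the scale factors disappear rather than merely rescale $R$. I expect no substantive obstacle beyond keeping the change of variable under the minimum honest, for which the surjectivity half of the bijectivity of $U$ is exactly what is needed.
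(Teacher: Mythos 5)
Your proof is correct and rests on exactly the same core fact as the paper's: the Jacobian-cancellation identity $R_{T(\theta)}(U(x)) = R_{\theta}(x)$, derived from the two density-level automorphism conditions. The only difference is packaging --- the paper argues by contradiction, pulling the minimizer of $R_{\theta_2}$ back through $U$ to get $R^*_{\theta_1}\le R^*_{\theta_2}$ and then invoking the inverse automorphism $(U^{-1},T^{-1})$ for the reverse inequality, whereas your direct change of variables under the minimum (using surjectivity of $U$) delivers both directions at once.
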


\begin{proof}
Assume to the contrary that for some such $\theta_1, \theta_2\in\Theta$, the
inequality
$R^*_{\theta_1}>R^*_{\theta_2}$
holds.

Let $(U,T)$ be an automorphism on $(X,\Theta)$ such that $T(\theta_1)=\theta_2$,
and let $x\in X$ be a value such that $R_{\theta_2}(\cdot)$ attains its minimum
at $U(x)$.

By definition,
\begin{align*}
R^*_{\theta_1} &\le R_{\theta_1}(x)
=\log\left(\frac{r(x)}{f(x|\theta_1)}\right) \\
&=\log\left(\frac{r(U(x))\left|\frac{\text{d}U(x)}{\text{d}x}\right|}{f(U(x)|T(\theta_1))\left|\frac{\text{d}U(x)}{\text{d}x}\right|}\right) \\
&=R_{\theta_2}(U(x))
=R^*_{\theta_2},
\end{align*}
contradicting the assumption.

The option $R^*_{\theta_1}<R^*_{\theta_2}$ also
cannot hold, because $(U^{-1}, T^{-1})$ is also an automorphism, this one
mapping $\theta_2$ to $\theta_1$.
\end{proof}

Similarly:

\begin{lemma}\label{L:comprehensive}
Every observation-transitive estimation problem is observation-homogeneous.

More generally, for any $x_1,x_2\in X$ for which there exists an automorphism
$(U,T)$ such that $U(x_1)=x_2$,
$R_{\textit{opt}}(x_1)=R_{\textit{opt}}(x_2)$.
\end{lemma}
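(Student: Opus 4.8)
The plan is to prove this exactly dually to Lemma~\ref{L:homogenous}, interchanging the roles of the observation and the parameter. The first sentence follows immediately from the ``more generally'' part: if the problem is observation transitive, then for any $x_1,x_2\in X$ there is an automorphism $(U,T)$ with $U(x_1)=x_2$, whence $R_{\textit{opt}}(x_1)=R_{\textit{opt}}(x_2)$; since this holds for every pair, $R_{\textit{opt}}$ is constant and the problem is comprehensive. So it suffices to establish the general statement, and I would state the first claim as a one-line corollary of it.

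The key step, which I expect to carry the whole argument, is the pointwise identity
\[
R_\theta(x)=R_{T(\theta)}(U(x))\qquad\text{for all }x\in X,\ \theta\in\Theta.
\]
This is obtained exactly as in the displayed computation inside the proof of Lemma~\ref{L:homogenous}: substitute the automorphism restatements $r(x)=r(U(x))\left|\frac{\text{d}U(x)}{\text{d}x}\right|$ and $f(x|\theta)=f(U(x)|T(\theta))\left|\frac{\text{d}U(x)}{\text{d}x}\right|$ into the definition \eqref{Eq:Rdef} of $R_\theta(x)$, so that the two Jacobian factors cancel inside the logarithm.

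Given the identity, the result is immediate. Fixing an automorphism $(U,T)$ with $U(x_1)=x_2$, for every $\theta$ we get $R_\theta(x_1)=R_{T(\theta)}(U(x_1))=R_{T(\theta)}(x_2)$, and taking the minimum over $\theta\in\Theta$ yields
\[
R_{\textit{opt}}(x_1)=\min_{\theta\in\Theta}R_\theta(x_1)=\min_{\theta\in\Theta}R_{T(\theta)}(x_2)=\min_{\theta'\in\Theta}R_{\theta'}(x_2)=R_{\textit{opt}}(x_2).
\]
The only point requiring care, and hence the main (though minor) obstacle, is the re-indexing of the minimum in the third equality: it relies on $T:\Theta\to\Theta$ being a bijection, which is part of the definition of an automorphism, so that $T(\theta)$ sweeps out all of $\Theta$ as $\theta$ does; the assumed continuity and positive determinants guarantee the minima are attained. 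Because $T$ is a bijection the identity applies in both directions at once, so unlike the contradiction argument used for Lemma~\ref{L:homogenous} no separate treatment of the two inequalities is needed, and the statement collapses into this single chain of equalities.
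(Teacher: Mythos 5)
Your proposal is correct, and it rests on exactly the same computational core as the paper's proof: the Jacobian-cancellation identity $R_\theta(x)=R_{T(\theta)}(U(x))$, extracted from the displayed calculation in the proof of Lemma~\ref{L:homogenous}. Where you differ is in the logical packaging. The paper proves this lemma by literally dualizing the contradiction argument of Lemma~\ref{L:homogenous}: it assumes $R_{\textit{opt}}(x_1)>R_{\textit{opt}}(x_2)$, chooses a $\theta$ such that $R_{(\cdot)}(x_2)$ attains its minimum at $T(\theta)$ (implicitly using surjectivity of $T$ to pull the minimizer back, and requiring the minimum to be attained), derives a contradiction, and then handles the reverse inequality by invoking the inverse automorphism $(U^{-1},T^{-1})$. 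You instead run a single direct chain of equalities and dispose of both directions at once by re-indexing $\min_{\theta}R_{T(\theta)}(x_2)=\min_{\theta'}R_{\theta'}(x_2)$ via bijectivity of $T$. The two arguments are mathematically equivalent, but yours is cleaner and marginally more robust: the re-indexing needs only surjectivity of $T$ and remains valid verbatim with $\min$ replaced by $\inf$, so it does not depend on the minimum being attained, whereas the paper's choice of a minimizing parameter does. One small caveat: attainment of the minima is not actually a consequence of continuity and positive Jacobian determinants, as your last sentence suggests, but of the paper's blanket regularity assumptions in Section~\ref{SS:notations}; since your argument can avoid attainment altogether, this is immaterial. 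Your derivation of the comprehensiveness claim as a one-line corollary of the ``more generally'' clause matches the paper's intent exactly.
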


\begin{proof}
The proof is identical to the proof of Lemma~\ref{L:homogenous},
except that instead of choosing $(U,T)$ such that $T(\theta_1)=\theta_2$
we now choose an automorphism such that $U(x_1)=x_2$, and instead of
choosing $x\in X$ such that $R_{\theta_2}(\cdot)$ attains its minimum at $U(x)$,
we choose $\theta \in \Theta$ such that $R_{\theta}(x_2)$ attains its minimum
over all $\theta$ at $T(\theta)$.
\end{proof}

We now introduce two other properties, which are expectable of typical,
natural problems. These are properties that we require merely to exclude
potential pathological behaviours of the problem (and of estimators on it).
The first of these is \emph{concentration}.

Define for every $\epsilon>0$,
\[
\tilde{x}_{\epsilon}(\theta)\defeq\{x\in X | R_{\theta}(x)-R^*_\theta<\epsilon\},
\]
and
\[
\hat{\theta}_{\epsilon}(x)\defeq\{\theta\in\Theta|x\in\tilde{x}_{\epsilon}(\theta)\}.
\]

\begin{defi}
An estimation problem $(\mathbf{x},\boldsymbol{\theta})$ will be called
\emph{concentrated} if for every $x \in X$ there is an $\epsilon>0$
for which $\hat{\theta}_{\epsilon}(x)$ is a bounded set.
\end{defi}

The $\tilde{x}_\epsilon$ reverse estimator and its inverse are generalisations
of the ideal point and ideal group concepts, considering an observation $x$
to be a good representative for $\theta$ if $R_\theta(x)$ equals its optimal
value, $R^*_\theta$, up to a difference of some predetermined margin,
$\epsilon$.

To motivate the definition of concentration and to understand it
intuitively, consider the following.

As discussed in Section~\ref{SS:ip}, the justification for the use of the
Ideal Point estimator is strongest when the inverse of the relationship between
$\theta$ values and their ``prototypical examples'' (in the sense of
$\tilde{x}_{\text{IP}}(\theta)$) is functional, each $x$ value mapping to
exactly one $\theta$ value. A weakening of this
condition would have required the relationship between $\theta$ values and
their ``good representatives'' (which in this context we understand as
the set $\tilde{x}_\epsilon(\theta)$, for some choice of $\epsilon$)
to be such that any observation $x$ can only
be a good representative to some ``closely-related'' set of $\theta$.
The property of being concentrated can be understood as the weakest form of
this condition.  It stipulates that
for each $x$ neither the $\theta$ values for which it is a prototypical
example nor the $\theta$ values for which it is an arbitrarily good
representative (i.e., $\tilde{x}_\epsilon(\theta)$, for an arbitrarily
small $\epsilon>0$) can be unbounded sets. If an estimation problem is not
concentrated, there is little reason to advocate for it the use of the
Ideal Point estimator, as it may return either unbounded sets as estimates,
or its estimates will be arbitrarily far from other almost equally good
parameter choices.

The scale free Neyman-Scott problem, however, meets the ideal requirement
for IP estimation, namely that the relationship between $\theta$ values
and their prototypical examples is, for this problem, a bijection, as is
demonstrated by the following lemma and its proof.

\begin{lemma}\label{L:NSconcentrated}
The scale free Neyman-Scott problem with fixed $N$ and $J$ and with
observable parameters $(s,m)$ is concentrated.
\end{lemma}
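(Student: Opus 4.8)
The plan is to show that for every fixed observation $x=(s,m)$ the map $\theta\mapsto R_{\theta}(x)$ is coercive on the parameter space $\Theta=\{(\sigma,\mu):\sigma>0\}$, so that each of its sublevel sets — and in particular $\hat{\theta}_{\epsilon}(x)$ — is bounded. First I would isolate the $\theta$-dependence in the expression \eqref{Eq:R_NS}. Collecting all terms that do not involve $\theta$ into one constant $c(x)$ (finite because $s>0$ throughout the reduced $(s,m)$ observation space used in Lemma~\ref{L:NStransitive}), I obtain
\[
R_{(\sigma^2,\mu)}(x)=c(x)+\frac{NJ}{2}\log\sigma^2+\frac{NJs^2+JQ}{2\sigma^2},
\]
where $Q\defeq\sum_n(m_n-\mu_n)^2\ge 0$. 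Since the scale free Neyman-Scott problem is transitive (Lemma~\ref{L:NStransitive}), it is homogeneous (Lemma~\ref{L:homogenous}), so $R^*_\theta$ equals a constant $R^*$ independent of $\theta$. Consequently $\hat{\theta}_{\epsilon}(x)=\{\theta:R_{\theta}(x)<R^*+\epsilon\}$ is exactly a sublevel set of the displayed function up to the additive constants $c(x)$ and $R^*$.

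Next I would establish coercivity in two stages, exploiting $s^2>0$. Setting $Q=0$ gives the one-variable function $\phi(\sigma^2)\defeq\frac{NJ}{2}\log\sigma^2+\frac{NJs^2}{2\sigma^2}$, which tends to $+\infty$ both as $\sigma^2\to 0^+$ and as $\sigma^2\to\infty$; hence the constraint $R_{\theta}(x)<R^*+\epsilon$ already confines $\sigma^2$ to a compact interval $[\sigma^2_{\min},\sigma^2_{\max}]\subset(0,\infty)$. Given this bound, the same constraint forces $\frac{JQ}{2\sigma^2}\le(R^*+\epsilon)-c(x)-\phi_{\min}$, where $\phi_{\min}=\min_{\sigma^2}\phi$ is finite; combined with $\sigma^2\le\sigma^2_{\max}$, this bounds $Q$ and therefore bounds $\mu$ around $m$. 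Putting the two together, $\hat{\theta}_{\epsilon}(x)$ lies in a compact subset of $\Theta$, so it is bounded, which is exactly what concentration demands. Indeed, since coercivity holds at every level, any $\epsilon>0$ works, not merely a small one.

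The hard part will be ruling out escape to infinity along a \emph{combined} direction — for example with $\sigma^2$ and $|\mu|$ growing together so that $Q/\sigma^2$ stays bounded while the individual coordinates diverge. The two-stage argument handles this cleanly: the $\log\sigma^2$ term alone already forbids $\sigma^2\to\infty$ regardless of $Q$, after which the coefficient $\frac{1}{2\sigma^2}$ of $Q$ is bounded below, pinning down $\mu$. A secondary point to watch is the boundary $\sigma\to 0$: because the $1/\sigma^2$ terms dominate the $\log$ there, $R_{\theta}(x)\to+\infty$, so the sublevel set stays bounded away from it and no completeness subtlety of $\Theta$ near $\sigma=0$ can affect the conclusion.
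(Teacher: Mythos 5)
Your proof is correct, but it takes a genuinely different route from the paper's. You work entirely in parameter space: after invoking transitivity (Lemma~\ref{L:NStransitive}) and Lemma~\ref{L:homogenous} to replace $R^*_\theta$ by a constant $R^*$, you identify $\hat{\theta}_{\epsilon}(x)$ with a sublevel set of $\theta\mapsto R_{\theta}(x)$ and prove that map coercive in two stages: the term $\frac{NJ}{2}\log\sigma^2+\frac{NJs^2}{2\sigma^2}$ alone (valid since $Q\ge 0$ and $s^2>0$) confines $\sigma^2$ to a compact subinterval of $(0,\infty)$, after which the coefficient $\frac{J}{2\sigma^2}$ of $Q=\sum_n(m_n-\mu_n)^2$ is bounded below, pinning $\mu$ near $m$. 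The paper argues in the opposite direction: it first notes that \eqref{Eq:R_NS} is strictly convex in $(s^2,m)$, so each $\tilde{x}_{\epsilon}(\theta)$ is bounded as a subset of observation space, and then uses the translation-automorphism structure in the $(\log s, m/s)$ and $(\log\sigma,\mu/\sigma)$ coordinates to conclude that all the sets $\tilde{x}_{\epsilon}(\theta)$ are translates of one another, whence each inverse set $\hat{\theta}_{\epsilon}(x)$ is a (reflected) translate of the same bounded shape. Your argument is more elementary and self-contained: it uses the group structure only through homogeneity, and it yields explicit compactness --- in particular it keeps $\sigma^2$ away from $0$, so the conclusion holds whether boundedness is read in the native $(\sigma,\mu)$ coordinates or in the logarithmic ones, a point the paper passes over with the word ``tantamount.'' What the paper's route buys in exchange is the stronger geometric fact that all regions $\tilde{x}_{\epsilon}(\theta)$ are congruent translates in the log coordinates, a picture that is reused later in the paper (notably in the proof of Corollary~\ref{C:uncountable}).
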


\begin{proof}
In the $(x_{nj})$ observation space, the probability density of a given set of
observations, $x$, assigned in the Neyman-Scott problem to a particular choice
of $\sigma^2$ and $\mu$, is
\begin{equation}\label{Eq:NSlikelihood}
f(x |\sigma^2, \mu)
=\frac{1}{(\sqrt{2\pi}\sigma)^{NJ}} e^{-\frac{\sum_{n=1}^{N} \sum_{j=1}^{J}(x_{nj}-\mu_n)^2}{2\sigma^2}}.
\end{equation}

Under the scale-free prior, this results in the marginal probability density
of the observations being
\begin{align}\label{Eq:NSr}
\begin{split}
r( x ) &=\int_{0}^{\infty} \frac{1}{\sigma^{N+1}} \iint_{-\infty}^{\infty} f(x|\sigma^2,\mu) \text{d}\mu_1\cdots\text{d}\mu_n\text{d}\sigma \\
&=2^{N/2-1} J^{-N/2} \pi^{-\frac{N(J-1)}{2}} (NJs^2)^{-\frac{NJ}{2}}\Gamma\left(\frac{NJ}{2}\right).
\end{split}
\end{align}

Note that $(s,m)$ is a sufficient statistic for this problem, because both
$f(x |\sigma^2, \mu)$ and $r(x)$ can be calculated
based on it, where for $f(x |\sigma^2, \mu)$ we use the relation
\[
\sum_{n=1}^{N} \sum_{j=1}^{J}(x_{nj}-\mu_n)^2 = (NJs^2)+J\sum_n (m_n-\mu_n)^2.
\]
For this reason, we can present the equations above solely in terms of $(s,m)$.

Substituting now \eqref{Eq:NSlikelihood} and \eqref{Eq:NSr} into
\eqref{Eq:Rdef}, we get
\begin{align}\label{Eq:R_NS}
\begin{split}
R_{(\sigma^2, \mu)}( x )
=&-\frac{N}{2}\log J+\frac{NJ+N-2}{2}\log 2 +\frac{N}{2}\log\pi \\
&+\log\left(\Gamma\left(\frac{NJ}{2}\right)\right)+\frac{NJ}{2}\log(\sigma^2)+\frac{NJs^2}{2\sigma^2} \\
&+\frac{J}{2\sigma^2}\sum_n (m_n-\mu_n)^2-\frac{NJ}{2}\log(NJs^2).
\end{split}
\end{align}

This is a strictly convex function of $(s^2,m)$, with a
unique minimum, for any $(\sigma^2,\mu)\in\Theta$. As such,
$\tilde{x}_\epsilon(\theta)$ is bounded for any $\epsilon$.

Note regarding the definition of $R$ in \eqref{Eq:Rdef} that it is invariant
to both the representation of the parameter space and the representation of
the observation space. For this reason, the calculated $R_{(\sigma^2,\mu)}(x)$
would be exactly the same as $R_{(\sigma^2,\mu)}(s^2,m)$ under the
parameterization of interest to us.

Consider, now, the Neyman-Scott problem under the
parameterization
$(\log \sigma,\mu/\sigma)$ and $(\log s,m/s)$. In this\linebreak
re-parameterization,
it is easy to see that for any translation function, $T_\Delta(a)=a+\Delta$, 
$(T_\Delta,T_\Delta)$ is an automorphism. In particular, this means that for
any $\theta_0$,
\[
\tilde{x}_\epsilon(\theta_0)=\{x+\theta_0-\theta|x\in\tilde{x}_\epsilon(\theta)\}.
\]
All such sets are translations of each other, having the same volume, shape and
bounding box dimensions.

It follows regarding the inverse function, $\hat{\theta}_{\epsilon}(x)$, that
for any $x$ it maps to a set of the same volume and bounding box dimensions
as each $\tilde{x}_\epsilon(\theta)$, albeit with an inverted shape.

In particular, it is bounded.

Being bounded under the new parameterization is tantamount to being bounded
under the native problem parameterization.
\end{proof}

The last property we wish to mention,
also relating to avoidance of pathological behaviour,
is the following.

\begin{defi}\label{D:local}
An estimation problem $(\mathbf{x},\boldsymbol{\theta})$
will be called \emph{local} if there exist values $V_0$ and $\gamma>1$ such that
for every $\theta\in\Theta$ there exist $\theta_1,\ldots,\theta_k$, such
that for all $x$ outside a subset of $X$ of total scaled probability at
most $V_0$,
\begin{equation}\label{Eq:local}
\gamma k f(x|\theta)<\max_{i\in\{1,\ldots,k\}} f(x|\theta_i).
\end{equation}
\end{defi}

The exact operational justification for the definition of locality is that
provided by Lemma~\ref{L:Vmax}.
Intuitively, however, one can think of locality as a property complementary to
concentration: whereas concentration requires all $\theta$ that have a certain
$x$ as a good representative to be bounded in parameter space, locality
requires all $x$ which are (in a somewhat different sense) good representatives
for a given $\theta$ to be bounded in terms of their total probability.

Essentially, a problem is local if for each $\theta$ one can find a finite
number of surrogates, $\theta_1,\ldots,\theta_k$, such that the set of $x$
values more closely associated with $\theta$ than with any of its surrogates
is bounded in its total probability. In this way, the surrogates can be
thought of as ``localising'' the impact of $\theta$.
In an estimation problem with a
non-pathological parameterization, one can expect to be able to find such
surrogates simply by surrounding the chosen $\theta$. Moreover, as
demonstrated by Lemma~\ref{L:properlocal}, non-locality does not arise at
all in problems that do not have an improper prior.

\begin{lemma}\label{L:properlocal}
Every proper estimation problem is local.
\end{lemma}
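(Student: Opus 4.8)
The plan is to use properness exactly where it is needed: it turns the marginal into a genuine probability density. First I would note that $\int_\Theta h=1$ together with $\int_X f(x\mid\theta)\,\mathrm{d}x=1$ gives, by Fubini, $\int_X r(x)\,\mathrm{d}x=1$, so on a proper problem $\mathrm{ScaledProb}$ is ordinary probability and $r$ is a bona fide distribution on $X$. Fixing $\theta$, the goal becomes to exhibit finitely many $\theta_1,\dots,\theta_k$ for which the \emph{bad set} $E\defeq\{x:\max_i f(x\mid\theta_i)\le \gamma k\, f(x\mid\theta)\}$ has $r$-probability at most a single constant $V_0<1$, using a single $\gamma>1$, uniformly in $\theta$. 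The natural handle on $E$ is to integrate over the generating parameter, $\mathrm{Prob}_r(E)=\int_\Theta h(\theta')\,\mathrm{Prob}(E\mid\theta')\,\mathrm{d}\theta'$, which lets me treat separately the $\theta'$ that lie far from $\theta$ and those that lie close to it.

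Next I would build the alternatives by a covering argument. Since $h$ is a finite measure with vanishing tails, a bounded set $K$ captures prior mass at least $1-\epsilon$; I would place $\theta_1,\dots,\theta_k$ as a finite net of $K$, its cardinality $k$ being allowed to depend on $\theta$ as the definition permits. For a generating $\theta'$ far from $\theta$, data drawn from $f(\cdot\mid\theta')$ is atypical for $\theta$, so $f(x\mid\theta')/f(x\mid\theta)$ is large with high $f(\cdot\mid\theta')$-probability, and by regularity the net point $\theta_i$ nearest $\theta'$ retains all but a bounded factor of that advantage, yielding $f(x\mid\theta_i)>\gamma k\, f(x\mid\theta)$ and hence $x\notin E$. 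Thus the only generating parameters that can contribute appreciably to $\mathrm{Prob}_r(E)$ are those within a fixed radius $\rho_0$ of $\theta$, and their total prior mass is at most $\sup_{\theta}\int_{B(\theta,\rho_0)}h$, which for a proper, everywhere-positive density on an unbounded space is a constant strictly below $1$. Taking $V_0$ to be this constant, plus the negligible $\epsilon$, would finish the bound.

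The hard part is the balance between the two effects hidden in the previous paragraph and making it uniform in $\theta$. The threshold to beat grows \emph{linearly} in the net size $k$, whereas the likelihood-ratio advantage enjoyed by a far-away generating $\theta'$ grows with its separation from $\theta$; I would need the latter to overtake $\gamma k$ for every $\theta'$ outside $B(\theta,\rho_0)$ while keeping $k$ finite, that is, I must choose the net resolution and the radius $\rho_0$ so that the (exponential) separation of distinguishable parameters dominates the (polynomial) number of net cells. Carrying this out with one pair $(\gamma,V_0)$ valid for \emph{all} $\theta$ is the crux, and it is precisely here that properness is indispensable: it guarantees both that the prior mass is tight enough to be captured by a finite net and that $\sup_\theta\int_{B(\theta,\rho_0)}h<1$. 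By contrast, under an improper, scale-invariant prior no bounded $K$ carries most of the mass and no finite net can dominate, which is the structural reason one expects locality to fail for the scale free Neyman-Scott problem.
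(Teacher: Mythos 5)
Your proposal misses the one-line observation that makes this lemma trivial, and the elaborate argument you substitute for it cannot be completed under the lemma's hypotheses. Definition~\ref{D:local} only asks for the \emph{existence} of some constant $V_0$: inequality \eqref{Eq:local} must hold for all $x$ outside \emph{some} subset of scaled probability at most $V_0$. For a proper problem the total (scaled) probability of the whole space $X$ is exactly $1$, so one may take $V_0=1$ and choose the excluded subset to be all of $X$; the condition then holds vacuously (with, say, $k=1$ and an arbitrary $\theta_1$). That is the paper's entire proof. Locality has teeth only when the prior is scaled or improper, so that the total scaled probability of $X$ exceeds every candidate constant $V_0$; your plan, which implicitly insists on a non-vacuous bound $V_0<1$, misreads the quantifier structure of the definition.

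Beyond being unnecessary, your route genuinely fails. A proper estimation problem need not be identifiable: take $f(x|\theta)$ identical for all $\theta$. Then for every finite net, $\max_i f(x|\theta_i)=f(x|\theta)<\gamma k\, f(x|\theta)$ for every $x$, so your bad set $E$ is all of $X$ and has probability $1$; no choice of net, radius $\rho_0$, or $\gamma$ yields $V_0<1$. Yet this problem is local, vacuously, as above. The steps you yourself flag as ``the crux'' --- uniformity in $\theta$ and the net-versus-separation balance --- are precisely the steps that cannot be carried out without assumptions (identifiability, uniform likelihood-ratio separation, unboundedness of $\Theta$ so that $\sup_\theta\int_{B(\theta,\rho_0)}h<1$) that the lemma does not grant. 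Finally, your closing intuition is backwards: the paper does not expect, and in fact disproves, failure of locality for the scale free Neyman-Scott problem --- Lemma~\ref{L:NSlocal}, proved in Appendix~\ref{S:NSlocal}, establishes that this improper-prior problem \emph{is} local, and that fact is essential to the main theorems. Locality is not where the proper/improper distinction bites; it is a property both settings enjoy, trivially in the proper case and via a careful construction in the scale free one.
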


\begin{proof}
Consider any estimation problem over a normalised (unscaled) prior, and
consequently also a normalised (unscaled) marginal.

The total probability over all $X$ is, by definition, $1$, so choosing
$V_0=1$ satisfies the conditions of locality.
\end{proof}

\begin{lemma}\label{L:NSlocal}
The scale free Neyman-Scott problem is local.
\end{lemma}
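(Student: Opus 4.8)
The plan is to reduce the statement to a single parameter value by transitivity and then exhibit an explicit finite dominating family there. Fix the reference point $\theta_0=(\sigma,\mu)=(1,\mathbf{0})$. By Lemma~\ref{L:NStransitive} every other $\theta$ admits an automorphism $(U,T)$ with $T(\theta_0)=\theta$, and it is enough to witness locality (with constants $V_0,\gamma$ and a family $\theta_1^0,\dots,\theta_k^0$) at $\theta_0$ alone. Indeed, taking the transported family $\theta_i=T(\theta_i^0)$ and using the automorphism identity $f(x\mid\theta_0)=f(U(x)\mid\theta)\left|\frac{\text{d}U(x)}{\text{d}x}\right|$ (and likewise for each $\theta_i^0$) makes the common Jacobian cancel in the defining inequality, so pointwise domination at $U^{-1}(x)$ for $\theta_0$ becomes pointwise domination at $x$ for $\theta$; meanwhile \eqref{Eq:automorphism_marginal} carries the exceptional set $B_0$ to $U(B_0)$ of equal scaled probability. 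Thus $V_0$ and $\gamma$ are chosen once (I will simply take $\gamma=2$) and need not depend on $\theta$, exactly as the definition of locality demands.

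From \eqref{Eq:R_NS}, writing $R_{\theta_0}-R_\theta=\log\bigl(f(x\mid\theta)/f(x\mid\theta_0)\bigr)$, the reference ratio at $\theta_0$ is
\[
\log\frac{f(x\mid\sigma,\mu)}{f(x\mid 1,\mathbf{0})}=-NJ\log\sigma+\frac{NJs^2+J\|m\|^2}{2}-\frac{NJs^2+J\|m-\mu\|^2}{2\sigma^2}.
\]
Since the marginal obeys $r(x)\propto s^{-NJ}$ (computed in the proof of Theorem~\ref{T:IG}), the scaled probability diverges only as $s\to0$ or as $\|m\|\to\infty$, so a finite family must dominate $\theta_0$ throughout both regions. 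For the second I take one wide point $\theta_+=(\beta,\mathbf{0})$ with $\beta>1$: the displayed difference then has leading coefficient $\tfrac12(1-\beta^{-2})>0$, so the ratio tends to $+\infty$ and $\theta_+$ alone beats $\gamma k\,f(\cdot\mid\theta_0)$ outside a ball $NJs^2+J\|m\|^2\le\Lambda$. For the first I place a grid of narrow points $(\alpha,\mu^{(j)})$ with $\alpha<1$, the centres $\mu^{(j)}$ spaced $\approx\alpha$ apart and covering the ball $\|m\|\le\sqrt{\Lambda/J}$; near $\{s=0\}$ the closest centre beats $f(\cdot\mid\theta_0)$ by roughly $\alpha^{-NJ}$.

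The heart of the argument, and where I expect the main difficulty, is the quantitative trade-off in the narrow-scale region. Taking the grid spacing equal to $\alpha$ bounds the per-point log-margin below by $NJ\log(1/\alpha)-\tfrac{J(N+1)}{2}$, while covering the ball (whose radius grows only like $\sqrt{\log(1/\alpha)}$) costs $k=O(\alpha^{-N})$ centres, so $\log k=N\log(1/\alpha)+O(\log\log(1/\alpha))$. Domination by the factor $\gamma k$ then reduces to
\[
N(J-1)\log(1/\alpha)>O\bigl(\log\log(1/\alpha)\bigr),
\]
which holds for all sufficiently small $\alpha$ precisely because $J\ge2$ forces $N(J-1)\ge N>0$; the residual self-reference between $\Lambda$, the grid radius and $k$ closes routinely since each feeds back only logarithmically. (For $J=1$ the surplus exponent $N(J-1)$ vanishes and the construction breaks, consistent with the degeneracy of Neyman-Scott at $J=1$.)

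It then remains only to bound the exceptional set. By construction every $x$ with $NJs^2+J\|m\|^2>\Lambda$ is handled by $\theta_+$, and every $x$ with $s<\alpha$ is handled either by the grid (when $\|m\|\le\sqrt{\Lambda/J}$) or again by $\theta_+$ (when $\|m\|$ is larger, which already places $x$ outside the ball). Hence the failure set is contained in $\{\,s\ge\alpha,\ NJs^2+J\|m\|^2\le\Lambda\,\}$, a bounded subset of $\setR^{NJ}$ on which $r(x)\le C\alpha^{-NJ}$ is bounded; its scaled probability is therefore finite, and I take that value for $V_0$, completing the verification at $\theta_0$ and, by the first paragraph, at every $\theta$.
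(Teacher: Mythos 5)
Your proof is correct, and its engine is the same as the paper's: an explicit finite dominating family built from a fine grid of shrunken-variance points near $s=0$ plus coarse handling of the far region, with the exceptional set a compact region of intermediate $s$, and with the counting closing precisely for the reason you isolate --- $J\ge 2$ makes the per-point likelihood gain $NJ\log(1/\alpha)$ beat the entropy cost $\log k\approx N\log(1/\alpha)$ of the grid (the paper's form of the same inequality is $(c/\sqrt{2NJ})^{4J}\ge(c+1)^7$, which likewise needs $J\ge 2$). Three of your choices are genuinely different, and each buys something. First, you reduce to a single reference point $\theta_0=(1,\mathbf{0})$ via the automorphisms of Lemma~\ref{L:NStransitive}; the paper instead builds its family equivariantly (all displacements proportional to $\sigma$), so that $\theta$-independence of the constants holds by inspection. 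Your reduction is more modular and shows that in any parameter-transitive problem the construction need only ever be done at one point. Second, you use a single wide point $(\beta,\mathbf{0})$ to dominate large $s$ and large $\|m\|$ simultaneously, exploiting that the coefficient $\tfrac{1}{2}(1-\beta^{-2})$ of $NJs^2+J\|m\|^2$ is positive; the paper spends $2N$ mean-shifted points $(\sigma,\mu^{n\pm})$ plus one inflated point $(e\sigma,\mu)$ on the same job, so your family is smaller and your case analysis has two cases instead of three. Third, you bound the exceptional set directly --- it lies in the ball $\sum_{n,j}x_{nj}^2\le\Lambda$ of $\setR^{NJ}$, on which $r(x)\le C\alpha^{-NJ}$ --- which is more elementary than the paper's conversion of volume in $(\log s,m/s)$-space into scaled probability. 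Two sketch-level points would each need a line in a final write-up: the per-point margin constant with spacing $\alpha$ is $\tfrac{5NJ}{8}$ rather than your $\tfrac{J(N+1)}{2}$ (immaterial, as both are $O(NJ)$ against $NJ\log(1/\alpha)$), and the $\Lambda$--$k$ circularity should be closed in a fixed order: fix $\beta$ and $\gamma$, let $\Lambda$ be determined by the wide-point condition (so $\Lambda=O(NJ\log(1/\alpha))$), and then verify that the grid condition $N(J-1)\log(1/\alpha)>\tfrac{N}{2}\log\Lambda+O(NJ)$ holds for all sufficiently small $\alpha$. Neither point affects correctness.
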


The proof of Lemma~\ref{L:NSlocal} is given in Appendix~\ref{S:NSlocal}.

\begin{defi}
An estimation problem is called \emph{regular} if it is
observation-transitive, parameter-homogeneous, concentrated and local.
\end{defi}

\subsection{Relating SMML to IP}\label{SS:SMML_IP}

We will now show that for regular problems
one can infer from the IP solution to the SMML solution.

Our first lemma proves for a family of estimation problems that the
SMML solutions to these problems do not diverge entirely, in the sense of
allocating arbitrarily high (scaled) probabilities to single $\theta$ values.
Although a basic requirement for any good estimator, no such result was
previously known for SMML.

For a code-book $F$, let
\[
\textit{region}_F(\theta)\defeq\{x|F(x)=\theta\}
\]
be known as the \emph{region of $\theta$ in $F$}.

\begin{lemma}\label{L:Vmax}
For every local estimation problem there is a $V_{\textit{max}}$ such that no
SMML code-book $F$ for the problem contains any
$\theta\in\Theta$ whose region has scaled probability greater than
$V_{\textit{max}}$ in the marginal distribution of $X$.
\end{lemma}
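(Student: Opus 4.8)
The plan is to argue by contradiction: if some SMML code-book $F$ had a parameter $\theta^*$ whose region carried enormous scaled probability, I would refine that single region into several, using the locality witnesses of $\theta^*$, and show the refinement strictly decreases $L$, contradicting optimality. The existence of a uniform ceiling $V_{\textit{max}}$ then follows.

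First I would rewrite the objective as a single integral. Writing $q(\theta)\defeq\text{ScaledProb}(\mathbf{x}\in\textit{region}_F(\theta))=\int_{\textit{region}_F(\theta)}r(x)\text{d}x$, the entropy term is $L_E(F)=-\int_X r(x)\log q(F(x))\text{d}x$ and $L_P(F)=\int_X r(x)\log\frac{r(x)}{f(x|F(x))}\text{d}x$, so that
\[
L(F)=\int_X r(x)\log\frac{r(x)}{q(F(x))\,f(x|F(x))}\text{d}x.
\]
Since $\int_X r\log r$ is independent of $F$, minimising $L$ is the same as maximising $\Phi(F)\defeq\int_X r(x)\log\big(q(F(x))\,f(x|F(x))\big)\text{d}x$. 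This isolates the tension driving the lemma: a large region lowers the entropy cost (through a large, favourable $\log q$) but pays heavily in the likelihood term, since one parameter cannot fit a wide spread of data.

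Next, suppose $q(\theta^*)=V$ with region $S$. Applying locality to $\theta^*$ gives $\gamma>1$, $V_0$ and witnesses $\theta_1,\dots,\theta_k$ with $\gamma k\,f(x|\theta^*)<\max_i f(x|\theta_i)$ off an exceptional set $E$ of scaled probability at most $V_0$. I would define $F'$ to agree with $F$ outside $S$ and, on $S$, send each $x$ to whichever of $\theta^*,\theta_1,\dots,\theta_k$ maximises $f(x|\cdot)$. Because every point retains $\theta^*$ as an option, $f(x|F'(x))\ge f(x|\theta^*)$ everywhere, while on $S\setminus E$ the likelihood improves by a factor at least $\gamma k$; this makes the likelihood part of $\Delta\Phi$ at least $(V-V_0)\log(\gamma k)$. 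The split into at most $k+1$ sub-regions raises the entropy (lowers the coding part) by at most the entropy of the induced mass distribution, which is $O(V\log k)$. The factor $\gamma k$ in the definition of locality is calibrated precisely so that the $\log(\gamma k)$ likelihood gain cancels the $\log k$ entropy cost, leaving a net of order $V\log\gamma>0$; hence for $V$ large enough $\Delta\Phi>0$ and $F$ is not optimal.

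The hard part will be making this threshold uniform in $\theta^*$ and fully rigorous. The entropy estimate must be sharpened: since $\theta^*$ is dominated off $E$, its sub-region carries mass at most $V_0$, and I must show the induced distribution has entropy $\log k+o(1)$ rather than the crude $\log(k+1)$, so that the $V\log k$ terms of gain and cost genuinely cancel. What remains is a residual cost from the bounded-probability exceptional set, which bears entropy cost but no guaranteed likelihood gain; controlling this so that the improvement threshold $V_{\textit{max}}$ does not depend on the (a priori $\theta$-dependent) number of witnesses $k$ is the crux — it follows cleanly if the witnesses can be chosen with $k$ bounded, and otherwise requires careful accounting of the $O(V_0\log V)$ and $O(V_0\log k)$ remainders against the leading $V\log\gamma$. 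Two routine technical points — that $\theta_1,\dots,\theta_k$ may be taken distinct from code-words already used by $F$ (reusing one only raises its $q$, and so only helps), and that $F'$ remains piecewise constant — I would dispatch along the way.
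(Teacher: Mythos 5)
Your overall strategy is the same as the paper's (argue by contradiction; split the over\-weighted region among the locality witnesses; play the $\log(\gamma k)$ likelihood gain against the $\log k$ entropy cost), but your construction has a genuine gap, and it sits exactly at the point you flag as ``the crux.'' You define $F'$ on \emph{all} of $S$ by sending each $x$ to the argmax of $f(x|\cdot)$ over $\{\theta^*,\theta_1,\ldots,\theta_k\}$. This allows up to $V_0$ of mass from the exceptional set $E$ to flow to the witnesses: on $E$ the only guarantee is $f(x|F'(x))\ge f(x|\theta^*)$, so that mass earns essentially no likelihood gain, yet in the worst case it pays the full entropy price of being spread across $k$ cells. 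Quantitatively, if the adversary sends all of $E\cap S$ to the witnesses and the resulting masses split evenly, your net improvement bound degrades to roughly $(V-V_0)\log\gamma - V_0\log k$, so the threshold on $V$ you can extract grows like $V_0\log k/\log\gamma$. Since Definition~\ref{D:local} quantifies as ``for every $\theta$ there exist $\theta_1,\ldots,\theta_k$'' --- i.e.\ $k$ may depend on $\theta$ and need not be bounded over $\Theta$ --- this does not yield a single $V_{\textit{max}}$ valid for the whole problem, which is what the lemma requires. Your proposed escape (``it follows cleanly if the witnesses can be chosen with $k$ bounded'') is not available: nothing in the definition of locality, nor in the paper's verification of it for Neyman--Scott, gives a uniform bound on $k$, and the whole point of the $\gamma k$ factor in \eqref{Eq:local} is to make the argument survive arbitrary $k$.

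The fix is a small change of construction, and it is what the paper does: leave the exceptional mass where it is. Define $F'(x)=F(x)$ for all $x\notin X'$, where $X'\subseteq\textit{region}_F(\theta^*)$ is the part on which \eqref{Eq:local} holds, and reassign only $X'$ to the best witness among $\theta_1,\ldots,\theta_k$ (never to $\theta^*$). Then the mass $V_b$ handed to the witnesses is exactly the mass that earns the $\log(\gamma k)$ gain, so the $V_b\log k$ terms cancel identically, independently of $k$; what remains is $V_b\log\gamma$ minus only the mixing-entropy terms $V_a\log\bigl((V_a+V_b)/V_a\bigr)+V_b\log\bigl((V_a+V_b)/V_b\bigr)$, where $V_a\le V_0$ is the retained exceptional mass. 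These depend only on the ratio $\beta=V_a/V_b$, and $\beta\le V_0/(V_{\textit{max}}-V_0)$ is forced small by taking $V_{\textit{max}}=(\beta_0^{-1}+1)V_0$ with $\beta_0$ chosen so that $\beta\log(\beta^{-1}+1)+\log(\beta+1)\le\log\gamma$ for all $\beta<\beta_0$. The resulting threshold depends only on $\gamma$ and $V_0$ --- uniform in $\theta$ and in $k$ --- which is precisely the uniformity your version cannot reach. (Your two side remarks are fine: reusing an existing code-word only lowers the entropy cost by superadditivity of $x\log x$, and piecewise constancy of $F'$ is immediate.)
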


\begin{proof}
Let $V_0$ and $\gamma$ be as in Definition~\ref{D:local}. Note
that $V_0$ can always be increased without violating the conditions of the
definition, so it can be assumed to be positive.

Assign $V_{\textit{max}}=(\beta_0^{-1}+1)V_0$ for a constant $\beta_0>0$ to be
computed later on, and assume for contradiction that $F$
contains a $\theta$ whose region, $X_\theta$, has scaled probability $V$
greater than $V_{\textit{max}}$. By construction, $X_\theta$ contains a
non-empty, positive scaled probability region $X'$ wherein \eqref{Eq:local} is
satisfied.

Let $V_b$ be the scaled probability of $X'$, and let $V_a$ be $V-V_b$.

Also, define $\beta=V_a/V_b$, noting that
\begin{equation}\label{Eq:bb0}
\beta<\beta_0,
\end{equation}
because,
by assumption, $V_a+V_b>V_{\textit{max}}$ and $V_a\le V_0$, so
\[
\beta^{-1} = \frac{V_b}{V_a} > \frac{V_{\textit{max}}}{V_0}-1 = \beta_0^{-1}.
\]

We will design a code-book $F'$ such that $L(F')<L(F)$, proving by contradiction
that $F$ is not optimal.

Our definition of $F'$ is as follows. For all $x\notin X'$, $F'(x)=F(x)$.
Otherwise, $F'(x)$ will be the value among $\theta_1,\ldots,\theta_k$
for which the likelihood of $x$ is maximal.

Recall that
\[
L(F)-L(F')=\left(L_E(F)-L_E(F')\right)+\left(L_P(F)-L_P(F')\right).
\]

Because, by construction, the set $X'$, of scaled probability $V_b$, satisfies
that for any $x\in X'$,
\[
\log f(x|F'(x))-\log f(x|F(x))>\log (\gamma k),
\]
we have
\begin{equation}\label{Eq:LPdiff}
L_P(F)-L_P(F')>V_b \log (\gamma k).
\end{equation}
On the other hand, the worst-case addition in (scaled) entropy caused by
splitting the set $X'$ into $k$ separate $\theta_i$ values is if each
$\theta_i$ receives an equal probability. We can write this worst-case
addition as
\begin{align}\label{Eq:LEdiff}
\begin{split}
L_E(F')-L_E(F) \le & \left[-V_a \log V_a-\sum_{i=1}^{k}\frac{V_b}{k}\log\left(\frac{V_b}{k}\right)\right] \\
& \quad\quad -\left[-(V_a+V_b)\log(V_a+V_b)\right].
\end{split}
\end{align}
This is in the case that $V_a>0$. If $V_a=0$, the expression $V_a \log V_a$
is dropped from \eqref{Eq:LEdiff}. This change makes no difference in the later
analysis, so we will, for convenience, assume for now that $V_a>0$.

Under the assumption $V_a>0$, we can subtract \eqref{Eq:LEdiff}
from \eqref{Eq:LPdiff} to get
\begin{align}\label{Eq:Ldiff}
\begin{split}
L(F)-L(F') & > V_b \log(\gamma k)-V_b \log k -  V_a \log\left(\frac{V_a+V_b}{V_a}\right) \\
& \quad\quad - V_b \log\left(\frac{V_a+V_b}{V_b}\right) \\
&= V_b \log \gamma -  V_a \log\left(\frac{V_a+V_b}{V_a}\right) \\
& \quad\quad - V_b \log\left(\frac{V_a+V_b}{V_b}\right).
\end{split}
\end{align}

To reach a contradiction, we want $L(F)>L(F')$. If $V_a=0$,
equation~\eqref{Eq:Ldiff}
degenerates to $L(F)-L(F') > V_b \log \gamma \ge 0$ for an immediate
contradiction. Otherwise, contradiction is reached if
\[
V_b \log \gamma -  V_a \log\left(\frac{V_a+V_b}{V_a}\right) - V_b \log\left(\frac{V_a+V_b}{V_b}\right) \ge 0,
\]
or equivalently if
\begin{equation}\label{Eq:b1}
\beta\log(\beta^{-1}+1)+\log(\beta+1)\le \log \gamma.
\end{equation}

A small enough $\beta$ value can bring the left-hand side of \eqref{Eq:b1}
arbitrarily close to $0$, and in particular to a value lower
than $\log \gamma$ for any $\gamma>1$.

By choosing a small enough $\beta_0$, we can ensure than any $\beta$
satisfying \eqref{Eq:bb0} will also satisfy \eqref{Eq:b1},
creating a contradiction and proving our claim.
\end{proof}

Lemma~\ref{L:Vmax} now allows us to draw a direct connection between SMML and
$\tilde{x}_{\epsilon}(\theta)$.

\begin{thm}\label{T:approx}
In every local, parameter-homogeneous estimation problem
$(\mathbf{x},\boldsymbol{\theta})$, for every SMML code-book $F$
and for every $\epsilon>0$
there exists a $\theta_0\in\Theta$ for which the set
\[
\textit{region}_{F}(\theta_0)\cap \tilde{x}_{\epsilon}(\theta_0)
\]
is a set of positive scaled probability in the marginal distribution of $X$.
\end{thm}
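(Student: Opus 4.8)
The plan is to argue by contradiction. Suppose there is an $\epsilon>0$ and an SMML code-book $F$ for which $\textit{region}_F(\theta_0)\cap\tilde{x}_{\epsilon}(\theta_0)$ has zero scaled probability for every $\theta_0\in\Theta$. By the definition of $\tilde{x}_{\epsilon}$, this says that for (scaled-)almost every $x$ the code-word $F(x)$ assigned to $x$ satisfies $R_{F(x)}(x)-R^*_{F(x)}\ge\epsilon$, and since the problem is homogeneous, $R^*_{F(x)}=R^*$ is the same constant for every code-word. Hence $R_{F(x)}(x)\ge R^*+\epsilon$ almost everywhere, i.e.\ $F$ is uniformly $\epsilon$-suboptimal in its $L_P$ contribution. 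I would then exhibit a competing code-book $F'$ with $L(F')<L(F)$, contradicting the optimality of $F$.

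To build $F'$ I would fix a target parameter $\theta^*$ together with its near-ideal set $A=\tilde{x}_{\epsilon/2}(\theta^*)$, which has positive scaled probability because $R_{\theta^*}$ is continuous and $r$ is everywhere positive, and reassign every $x\in A$ to $\theta^*$ while leaving $F$ unchanged elsewhere. On $A$ the local contribution to $L_P$ drops from at least $R^*+\epsilon$ to less than $R^*+\epsilon/2$, so $L_P(F)-L_P(F')>(\epsilon/2)\,\text{ScaledProb}(A)$. The entropy term, on the other hand, may increase, and bounding this increase is where Lemma~\ref{L:Vmax} enters: since no region of $F$ carries scaled probability above $V_{\textit{max}}$, the set $A$ must be fragmented across at least $\text{ScaledProb}(A)/V_{\textit{max}}$ distinct code-words, so consolidating that mass into the single new word $\theta^*$ is entropically cheap and, when $A$ is large enough, actually \emph{lowers} $L_E$. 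The change in $L_E$ can be controlled by the same convexity estimate for $-p\log p$ used in the proof of Lemma~\ref{L:Vmax}.

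The main obstacle is the delicate balance between the two effects: a newly created code-word of scaled probability $V$ incurs an entropy cost of order $-V\log V$, whose \emph{per-unit} cost $-\log V$ diverges as $V\to 0$ and can exceed the fixed per-unit $L_P$ saving of $\epsilon/2$ when the captured region is small. I would therefore not reassign an arbitrarily small set: the argument must choose $A$ (tuning the radius between $0$ and $\epsilon/2$, and if necessary unioning near-ideal sets of several targets) so that its scaled probability is comparable to, or exceeds, $V_{\textit{max}}$, at which point the fragmentation forced by Lemma~\ref{L:Vmax} guarantees the entropy change is dominated by the $\epsilon/2$ gain. Verifying that such a set exists in the general local, homogeneous setting is the step I expect to require the most care, and it is exactly the place where homogeneity (the constant $R^*$) and locality (the $V_{\textit{max}}$ bound) are both indispensable.
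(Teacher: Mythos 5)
Your reduction of the hypothesis to ``$R_{F(x)}(x)\ge R^*+\epsilon$ for (scaled-)almost every $x$'' is correct, and you have correctly located the crux --- but the crux is fatal to your one-shot construction, not merely delicate. Your contradiction needs the reassigned set $A$ to carry scaled probability on the order of $V_{\textit{max}}$: writing $a=\text{ScaledProb}(A)$, the $L_P$ saving is about $(\epsilon/2)a$, while the entropy cost of creating the new code-word, after a concavity bound on the depleted regions, can be as large as $a\left[\log(V_{\textit{max}}/a)+1\right]$, so you need roughly $a> e^{1-\epsilon/2}V_{\textit{max}}$. Nothing in the hypotheses delivers this: locality fixes $V_{\textit{max}}$ through the constants $V_0,\gamma$ of Definition~\ref{D:local}, while homogeneity says nothing about the mass of $\tilde{x}_{\epsilon/2}(\theta^*)$, which is positive (by continuity and positivity of the densities) but can be arbitrarily small relative to $V_{\textit{max}}$ --- the two quantities come from unrelated parts of the assumptions. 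Your fallback of unioning the near-ideal sets of several targets does not repair this: the per-unit $L_P$ gain of $\epsilon/2$ is only available when each $x$ is assigned to a $\theta$ with $x\in\tilde{x}_{\epsilon/2}(\theta)$, so each target must become its own new code-word, the cost $-a_i\log a_i$ is incurred separately for each small piece, and the divergence of $-\log a_i$ that you yourself identified returns.

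The paper escapes this trap by never creating a new code-word and never making a macroscopic move. It transfers an infinitesimal mass between two \emph{existing} positive-probability code-words: moving mass from $\textit{region}_F(\theta_2)\cap\tilde{x}_{\epsilon/2}(\theta_1)$ to $\theta_1$ changes $L_E$ at the finite rate $\log V(\theta_2)-\log V(\theta_1)$ and changes $L_P$ at a rate below $-\epsilon/2$, so optimality of $F$ forces $\log V(\theta_2)-\log V(\theta_1)\ge\epsilon/2$ whenever that intersection has positive scaled probability. This turns the assumed global deficit into a directed graph on the positive-probability code-words with no self-loops (your contradiction hypothesis), no cycles (since $V$ strictly increases along edges), and positive out-degree at every vertex (since $\tilde{x}_{\epsilon/2}(\theta)$ has positive mass that must land in someone's region); hence there are arbitrarily long paths, and chaining the inequality multiplies $V$ by $e^{\epsilon/2}$ per edge until it exceeds the bound $V_{\textit{max}}$ of Lemma~\ref{L:Vmax}, giving the contradiction. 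So Lemma~\ref{L:Vmax} is indeed the key external input, as you guessed, but it is reached by iterating local moves rather than by a single large reassignment, and that iteration is precisely what removes any need for a lower bound on the mass of near-ideal sets.
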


\begin{proof}
Suppose to the contrary that for some $\epsilon$, no element
$\theta_0\in\Theta$ is mapped from a positive scaled probability of $x$
values from its respective $\tilde{x}_{\epsilon}(\theta_0)$.

Let $\Theta^*\subseteq\Theta$ be the set of $\theta$
values with positive scaled probability regions in $F$, and
let $G$ be the directed graph whose vertex set is $\Theta^*$ and which contains
an edge from $\theta_1$ to $\theta_2$ if the intersection
\[
\tilde{x}_{\epsilon/2}(\theta_1) \cap \textit{region}_F(\theta_2)
\]
has positive scaled probability.
By assumption, $G$ has no self-loops.

Let
\[
V(\theta)=\text{ScaledProb}(\mathbf{x}\in \textit{region}_F(\theta)).
\]

We claim that for any $(\theta_1,\theta_2)$ that is an edge in $G$,
\begin{equation}\label{Eq:v2v1}
\log V(\theta_2)-\log V(\theta_1)\ge\epsilon/2,
\end{equation}
an immediate consequence of which is that
$V(\theta_2)>V(\theta_1)$ and therefore $G$ cannot have any cycles.

To prove \eqref{Eq:v2v1}, note first that because of our assumption that
all likelihoods are continuous, $\tilde{x}_{\epsilon/2}(\theta)$, for every $\theta$
and any choice of $\epsilon>0$,
has positive measure in the space of $X$, and because of our assumption that
all likelihoods are positive, a positive measure in the space of $X$ translates
to a positive scaled probability. This also has the side effect that all
vertices in $G$ must have an outgoing edge (because this positive scaled
probability must be allocated to some edge).

Next, consider how transferring a small subset of $X$, of size
$\Delta$, in $\tilde{x}_{\epsilon/2}(\theta_1) \cap \textit{region}_F(\theta_2)$
from $\theta_2$
to $\theta_1$ changes $L()$. Given that $\Delta$ can be made arbitrarily small,
we can consider the rate of change, rather than the magnitude of change: for
$F$ to be optimal, we must have a non-negative rate of change, or else
a small-enough $\Delta$ can be used to improve $L()$.
Given that $L_E$ is the sum of
$-V(\theta^*) \log V(\theta^*)$ over all $\theta^*\in\Theta^*$,
by transferring probability from $\theta_2$ to $\theta_1$, the rate of change
to $L_E$ is $\log V(\theta_2)-\log V(\theta_1)$.

Consider now the rate of change to $L_P$.
By transferring probability from $\theta_2$,
where it is outside of $\tilde{x}_{\epsilon}(\theta_2)$ (and therefore by definition
assigned an $R$ value of at least $R^*+\epsilon$) to $\theta_1$,
where it is assigned into $\tilde{x}_{\epsilon/2}(\theta_1)$ (and therefore by
definition assigned an $R$ value that is smaller than $R^*+\epsilon/2$)
the difference is a reduction rate greater than $\epsilon/2$.

The condition that the rate of change of $L=L_E+L_P$ is nonnegative
therefore translates simply to \eqref{Eq:v2v1}, thus proving the equation's
correctness.

However, if $G$ contains no self-loops and no cycles, and every one of its
vertices has an outgoing edge, then it contains arbitrarily long paths
starting from any vertex.
Consider any such path starting at some $\theta_1$ of length greater than 
$2[\log(V_{\textit{max}})-\log V(\theta_1)]/\epsilon$,
where $V_{\textit{max}}$ is as in Lemma~\ref{L:Vmax}.
By \eqref{Eq:v2v1}, we have that the scaled probability
assigned to the $\theta$ value ending the path is greater than
$V_{\textit{max}}$, thereby reaching a contradiction.
\end{proof}

We can now present our main theorem, formalising the connection between the
SMML estimator and the ideal point.

\begin{thm}\label{T:SMML_IP}
In any regular estimation problem, for every $x\in X$,
\begin{equation}\label{Eq:SMML_IP}
\hat{\theta}_{\text{SMML}}(x) \cap \hat{\theta}_{\text{IP}}(x) \ne \emptyset.
\end{equation}

In particular, $\hat{\theta}_{\text{IP}}$ is a true estimator, in the sense
that $\hat{\theta}_{\text{IP}}(x) \ne \emptyset$ for every $x\in X$.
\end{thm}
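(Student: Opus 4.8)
\emph{Proof proposal.}
The plan is to fix an arbitrary $x_0\in X$ and to exhibit a single $\theta^*$ lying in both $\hat{\theta}_{\text{SMML}}(x_0)$ and $\hat{\theta}_{\text{IP}}(x_0)$. I would open with two preliminary facts. First, automorphisms preserve SMML-optimality: if $(U,T)$ is an automorphism and $F$ is an SMML code-book, then $F'\defeq T\circ F\circ U^{-1}$ is again an SMML code-book. Indeed, the change of variables $x=U(y)$ together with the automorphism identities $r(y)=r(U(y))|\text{d}U/\text{d}y|$ and $f(y|\theta)=f(U(y)|T(\theta))|\text{d}U/\text{d}y|$ leaves both $L_E$ and $L_P$ unchanged, and the same identities yield the pointwise invariance $R_\theta(y)=R_{T(\theta)}(U(y))$ already exploited in Lemma~\ref{L:homogenous}. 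Second, since a regular problem is homogeneous, $R^*_\theta=R^*$ for every $\theta$, so $x\in\argmin_{y\in X} R_\theta(y)$ is equivalent to $R_\theta(x)=R^*$; that is, $\hat{\theta}_{\text{IP}}(x)=\{\theta\in\Theta\mid R_\theta(x)=R^*\}$, and membership in $\hat{\theta}_{\text{IP}}(x)$ is precisely the statement that $x$ attains the global value $R^*$ for $\theta$.

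Next I would transport the near-ideal behaviour supplied by Theorem~\ref{T:approx} onto $x_0$. For each $\epsilon>0$, that theorem (applicable because a regular problem is local and homogeneous) furnishes an SMML code-book $F$ and a $\theta_0$ for which $\textit{region}_F(\theta_0)\cap\tilde{x}_{\epsilon}(\theta_0)$ has positive scaled probability, hence is non-empty; choose $x_1$ in it, so that $F(x_1)=\theta_0$ and $R_{\theta_0}(x_1)<R^*+\epsilon$. Using observation transitivity, pick an automorphism $(U,T)$ with $U(x_1)=x_0$ and set $F'\defeq T\circ F\circ U^{-1}$, which is an SMML code-book by the first preliminary fact. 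Writing $\theta'\defeq T(\theta_0)$, one obtains $F'(x_0)=\theta'$, so $\theta'\in\hat{\theta}_{\text{SMML}}(x_0)$, while the invariance of $R$ gives $R_{\theta'}(x_0)=R_{\theta_0}(x_1)<R^*+\epsilon$. Thus for every $\epsilon>0$ there is a point of $\hat{\theta}_{\text{SMML}}(x_0)$ that is $\epsilon$-close to ideal for $x_0$.

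Taking $\epsilon=1/n$ produces a sequence $\theta_n\in\hat{\theta}_{\text{SMML}}(x_0)$ with $R_{\theta_n}(x_0)<R^*+1/n$, i.e.\ $\theta_n\in\hat{\theta}_{1/n}(x_0)$. By concentration at $x_0$ there is an $\epsilon_0>0$ with $\hat{\theta}_{\epsilon_0}(x_0)$ bounded, and once $1/n<\epsilon_0$ the tail of $\{\theta_n\}$ lies in this bounded set. Relying on the parameter space being a finite-dimensional vector space, so that bounded sequences admit convergent subsequences, I extract $\theta_{n_k}\to\theta^*$. Then $\theta^*\in\hat{\theta}_{\text{SMML}}(x_0)$ because that set is closed by definition, and by continuity of $R$ one has $R_{\theta^*}(x_0)=\lim_k R_{\theta_{n_k}}(x_0)\le R^*$, while $R_{\theta^*}(x_0)\ge R^*_{\theta^*}=R^*$ always; the two bounds force $R_{\theta^*}(x_0)=R^*$, so $\theta^*\in\hat{\theta}_{\text{IP}}(x_0)$ by the restatement above. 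Hence $\theta^*$ lies in the intersection, establishing \eqref{Eq:SMML_IP}. The ``in particular'' clause is then immediate: $\hat{\theta}_{\text{SMML}}(x_0)$ is non-empty for every $x_0$, since it contains $F(x_0)$ for any SMML code-book $F$, so a non-empty intersection forces $\hat{\theta}_{\text{IP}}(x_0)\ne\emptyset$.

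I expect the limiting step to be the main obstacle. Theorem~\ref{T:approx} delivers only approximate, $\epsilon$-ideal witnesses, and converting these into an \emph{exact} ideal point requires extracting a genuine limit and verifying that it survives inside the closed set $\hat{\theta}_{\text{SMML}}(x_0)$. This is exactly where concentration (to confine the witnesses to a bounded region) and continuity of $R$ (to pass $R_{\theta_n}(x_0)\to R^*$ to the limit and squeeze against $R^*_{\theta^*}=R^*$) are indispensable: without concentration the $\theta_n$ could escape to infinity and no ideal point need exist. By contrast, the transport argument is routine once automorphism-invariance of both $L$ and $R$ is in hand.
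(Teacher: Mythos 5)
Your proposal is correct and follows essentially the same route as the paper's own proof: transport the $\epsilon$-ideal witnesses supplied by Theorem~\ref{T:approx} to the target point via an observation-transitivity automorphism (under which $L$ and $R$ are invariant, so the transported code-book is still an SMML code-book), then use concentration to extract a convergent subsequence whose limit lies in $\hat{\theta}_{\text{SMML}}(x)$ by closure and in $\hat{\theta}_{\text{IP}}(x)$ by continuity of $R$. Your explicit squeeze $R^*\le R_{\theta^*}(x_0)\le R^*$ via homogeneity and your explicit appeal to finite-dimensionality for the Bolzano--Weierstrass step merely spell out details the paper leaves terse.
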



\begin{proof}
%
%
Let $x$ be a value for which we want to prove \eqref{Eq:SMML_IP}.

From Theorem~\ref{T:approx} we know that for all $\epsilon$ there exists
an SMML code-book $F'$ and a $\theta^*\in\Theta$ for which
$\textit{region}_{F'}(\theta^*)\cap \tilde{x}_{\epsilon}(\theta^*)$
is non-empty.
Let $x_0$ be a value inside this intersection.

By observation transitivity, there is an automorphism $(U,T)$ such that
$x=U(x_0)$. Let $\theta_\epsilon=T(\theta^*)$.

Let us define $F$ by $F=U^{-1}\circ F' \circ T$. It is easy to verify that by
the definition of automorphism $L(F)=L(F')$, so $F$ is also an SMML code-book,
and furthermore
\[
\tilde{x}_{\epsilon}(\theta_\epsilon)=\{U(x)|x\in\tilde{x}_\epsilon(\theta^*)\},
\]
so $F(x)=\theta_\epsilon\in\hat{\theta}_{\epsilon}(x)$.

Consider now a sequence of such $\theta_\epsilon$ for $\epsilon\to 0$.
The set $\Theta$ is a complete metric space, by construction the
$\theta_\epsilon$ reside inside the nested sets $\hat{\theta}_{\epsilon}(x)$,
and by our assumption that the problem is concentrated, for a small enough
$\epsilon$, $\hat{\theta}_{\epsilon}(x)$ is bounded. We conclude, therefore,
that the sequence $\theta_\epsilon$ has a converging sub-sequence. Let
$\theta$ be a bound for one such converging sub-sequence.

We claim that $\theta$ is inside both
$\hat{\theta}_{\text{SMML}}(x)$ and $\hat{\theta}_{\text{IP}}(x)$, thus
proving that their intersection is non-empty.

To show this, consider first that we know $\theta\in\hat{\theta}_{\text{IP}}(x)$
because $R$ is a continuous function, and by construction
$R_\theta(x)=R^*_{\theta}$.

Lastly, for every $\epsilon$ in the sub-sequence,
$\theta_\epsilon\in\hat{\theta}_{\text{SMML}}(x)$, so
$\theta\in\hat{\theta}_{\text{SMML}}(x)$ follows from the closure of the
SMML estimator (which is guaranteed by definition).
\end{proof}

\begin{cor}\label{C:inconsistent}
For the scale free Neyman-Scott problem with fixed $N$ and $J$,
$\widehat{\sigma^2}_{\text{SMML}}(s,m)=s^2$
and
$\widehat{\mu_n}_{\text{SMML}}(s,m)=m_n$.

In particular, this is true when $N$ approaches infinity, leading SMML to be
inconsistent for this problem.
\end{cor}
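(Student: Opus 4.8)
The plan is to verify that the scale free Neyman-Scott problem satisfies the hypotheses of Theorem~\ref{T:SMML_IP}, and then to read off the SMML estimate from the Ideal Point estimate already computed in Theorem~\ref{T:IG}. The key observation is that the definition of ``regular'' decomposes into four conditions, every one of which has been established (or is an immediate consequence of what has been established) for this problem.

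First I would assemble regularity. Observation transitivity is immediate from Lemma~\ref{L:NStransitive}, since transitivity includes it by definition. Homogeneity follows by combining the parameter-transitivity half of Lemma~\ref{L:NStransitive} with Lemma~\ref{L:homogenous}. Concentration is exactly Lemma~\ref{L:NSconcentrated}, and locality is exactly Lemma~\ref{L:NSlocal}. Hence the scale free Neyman-Scott problem with fixed $N$ and $J$ is regular, so Theorem~\ref{T:SMML_IP} applies and yields, for every observation $(s,m)$, that $\hat{\theta}_{\text{SMML}}(s,m)\cap\hat{\theta}_{\text{IP}}(s,m)\neq\emptyset$.

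Next I would pin down the right-hand side of this intersection. Theorem~\ref{T:IG} already exhibits the forward Ideal Point estimate as $(s^2,m)$. That this is genuinely single-valued, rather than merely one element among several, follows from the strict convexity of $R_{(\sigma^2,\mu)}(x)$ in $(s^2,m)$ noted in the proof of Lemma~\ref{L:NSconcentrated}: strict convexity makes the reverse estimator $\tilde{x}_{\text{IP}}(\theta)$ single-valued, the map $\theta\mapsto\tilde{x}_{\text{IP}}(\theta)$ a bijection between $\Theta$ and $X$, and hence its inverse $\hat{\theta}_{\text{IP}}$ single-valued, with $\hat{\theta}_{\text{IP}}(s,m)=\{(s^2,m)\}$. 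Since the intersection in Theorem~\ref{T:SMML_IP} is non-empty while the IP side is this single point, we conclude $(s^2,m)\in\hat{\theta}_{\text{SMML}}(s,m)$, which in the shorthand of Section~\ref{SS:notations} is exactly $\widehat{\sigma^2}_{\text{SMML}}(s,m)=s^2$ and $\widehat{\mu_n}_{\text{SMML}}(s,m)=m_n$.

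Finally I would derive inconsistency. Each group contributes $\sum_{j}(x_{nj}-m_n)^2$ with expectation $(J-1)\sigma^2$, so by the law of large numbers $s^2\to\frac{J-1}{J}\sigma^2$ almost surely as $N\to\infty$ with $J$ fixed. Since $\frac{J-1}{J}<1$, the SMML estimate of $\sigma^2$ converges to a value strictly below the truth, establishing inconsistency. I expect the only genuinely delicate step to be the single-valuedness of $\hat{\theta}_{\text{IP}}$: Theorem~\ref{T:SMML_IP} guarantees merely a non-empty intersection, so without knowing that the Ideal Point estimate is the lone point $(s^2,m)$ one could not conclude that this specific (inconsistent) value must lie in the SMML estimate. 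It is the strict convexity of $R$ that secures this.
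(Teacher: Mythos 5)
Your proposal is correct and follows essentially the same route as the paper's own proof: regularity (assembled from Lemmas~\ref{L:NStransitive}, \ref{L:homogenous}, \ref{L:NSconcentrated} and \ref{L:NSlocal}) licenses Theorem~\ref{T:SMML_IP}, the single-valued Ideal Point estimate $(s^2,m)$ from Theorem~\ref{T:IG} then forces $(s^2,m)\in\hat{\theta}_{\text{SMML}}(s,m)$, and inconsistency follows because the consistent estimate is $\frac{J}{J-1}s^2$ rather than $s^2$. You are merely more explicit than the paper's two-line proof about verifying the four regularity conditions and about why $\hat{\theta}_{\text{IP}}$ is single-valued (the paper simply cites Theorem~\ref{T:IG} as having established this), which is a faithful unpacking rather than a different argument.
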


\begin{proof}
Recall that the ideal point is defined as the $x$ value
minimising $R_{(\sigma^2, \mu)}(x)$.

The general formula for $R$ in the scale-free Neyman-Scott problem was
derived in Section~\ref{SS:types} and is given in \eqref{Eq:R_NS}.
Differentiating $R$ according to $s^2$
and according to each $m_n$ we reach the following single-valued solution.
\begin{equation}\label{Eq:NSIP}
\widehat{\sigma^2}_{\textit{IP}}(x)=s^2,\quad
\widehat{\mu_n}_{\text{IP}}(x)=m_n.
\end{equation}
This is identical to the Maximum Likelihood estimate, and well known to be
inconsistent. The value
of the SMML estimator therefore follows from Theorem~\ref{T:SMML_IP}.

As a consistent estimator for $\sigma^2$ is $\frac{J}{J-1}s^2$ and not $s^2$,
the SMML estimator is inconsistent.
\end{proof}

At first glance, this result may seem impossible, because, as established,
an SMML code-book can only encode a countable number of $\theta$ values.
Corollary~\ref{C:uncountable} resolves this seeming paradox.

\begin{cor}\label{C:uncountable}
The scale free Neyman-Scott problem with fixed $N$ and $J$ admits
uncountably many distinct SMML code-books, and for every $(s,m)$
value there is a continuum of SMML estimates.
\end{cor}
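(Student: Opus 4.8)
The plan is to exploit the transitive automorphism structure of the problem (Lemma~\ref{L:NStransitive}) together with the code-book symmetrisation already used in the proof of Theorem~\ref{T:SMML_IP}. Recall that for the scale free Neyman-Scott problem the automorphisms are exactly the maps $U(s,m)=(\alpha s,\alpha m+\beta)$ and $T(\sigma,\mu)=(\alpha\sigma,\alpha\mu+\beta)$ with $\alpha>0$ and $\beta\in\setR^N$, and that this group acts \emph{simply transitively} on the observation space $\{s>0\}$ (given a source and a target, $\alpha$ and $\beta$ are uniquely determined). The fact I would reuse is that if $F_0$ is any SMML code-book and $(U,T)$ is any automorphism, then the transformed function $F_{(U,T)}$, characterised by $F_{(U,T)}(U(y))=T(F_0(y))$ for all $y$, is again an SMML code-book with $L(F_{(U,T)})=L(F_0)$.

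First I would fix an arbitrary observation $x_0=(s_0,m_0)$ with $s_0>0$ (the slice $s_0=0$ occurs with probability zero and may be excluded) and establish the ``continuum of estimates'' claim, from which the ``uncountably many code-books'' claim then follows at once. Starting from any single SMML code-book $F_0$, note that, being piecewise constant, it partitions $X$ into countably many cells; since $X$ carries positive marginal measure, at least one cell $X_j$, on which $F_0\equiv\theta_j=(\sigma_j,\mu_j)$, has positive Lebesgue measure. By Fubini the projection of $X_j$ onto the $s$-coordinate then has positive one-dimensional measure, and in particular contains a continuum of values $s_y$.

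Next, for each $y=(s_y,m_y)\in X_j$ I would use simple transitivity to pick the unique automorphism $(U_y,T_y)$ sending $y$ to $x_0$, namely $\alpha_y=s_0/s_y$ and $\beta_y=m_0-\alpha_y m_y$. The associated code-book $F_y\defeq F_{(U_y,T_y)}$ is then an SMML code-book with $F_y(x_0)=T_y(F_0(y))=T_y(\theta_j)$. A one-line computation gives that the $\sigma$-coordinate of this estimate is $s_0\sigma_j/s_y$, which is injective in $s_y$ and hence runs over a continuum of distinct values as $s_y$ ranges over the continuum identified above. Thus $\{F_y(x_0):y\in X_j\}$ is a continuum of distinct points, all lying in $\hat{\theta}_{\text{SMML}}(x_0)$, which proves the ``continuum of estimates'' claim; and because code-books that disagree at $x_0$ are distinct as functions, the family $\{F_y\}$ consists of uncountably many pairwise-distinct SMML code-books, which proves the other claim.

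I expect the only delicate point to be the measure-theoretic step guaranteeing a positive-measure cell whose $s$-projection is a genuine continuum; everything else is bookkeeping with the explicit automorphisms. I would also add a closing remark to dissolve the apparent paradox flagged before the corollary: no single code-book returns the ideal point everywhere---each $F_y$ is ``correct'' only on its own cell---but the union over the uncountable family $\{F_y\}$, over which $\hat{\theta}_{\text{SMML}}$ is defined, recovers the full continuum of estimates at every $x_0$.
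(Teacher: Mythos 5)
Your proposal is correct and takes essentially the same route as the paper: both arguments exploit the invariance of $L$ under the automorphisms of Lemma~\ref{L:NStransitive} (the transformation fact already used in Theorem~\ref{T:SMML_IP}) to transport a positive-measure cell of one optimal code-book onto a fixed observation, yielding a continuum of distinct estimates at that observation and hence uncountably many distinct code-books. The paper phrases this as small translations in $(\log s, m/s)$-space that keep $x$ inside its translated region, while you instead map each point $y$ of the cell onto $x_0$ via the simply transitive action; this is the same argument seen from the other side, with your Fubini and $\sigma$-coordinate injectivity steps merely making explicit details the paper leaves implicit.
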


\begin{proof}
Uncountably many distinct code-books can be generated by arbitrarily scaling
and translating any given code-book, which, as we have seen, does not alter
$L(F)$.

To show that for every $(s,m)$ value there are uncountably many distinct SMML
estimates, recall from our proof of Lemma~\ref{L:NSconcentrated}
that if we consider the problem in $(\log s,m/s)$
observation space and $(\log \sigma,\mu/\sigma)$ parameter space, then both
scaling and translation in the original parameter space are translations under
the new representation. If any $x$ belongs to a region of volume $V$ in this
space that is mapped to a particular $\theta$ by a particular $F$, one can
create a new code-book, $F'$, which is a translation of $F$ in both
$(\log s,m/s)$ and $(\log \sigma,\mu/\sigma)$, which would still be optimal.

As long as the translation in observation-space is such that $x$ is still
mapped into its original region, its associated $\theta'$ will be the
correspondingly-translated $\theta$. As such, the volume of $\theta$ values
associated with a single $x$ is at least as large the volume of the region
of $x$ (and, by observation-transitivity of the problem, at least as large as the volume
of the largest region in the code-book's partition).
\end{proof}

SMML is therefore not a point estimator for this problem at all.

\subsection{Relating IP to ML}

Beyond the connections between the SMML solution and the Ideal Point
approximation, there is also a direct link to the Maximum Likelihood estimate.

\begin{thm}\label{T:IP_ML}
If $(\mathbf{x},\boldsymbol{\theta})$ is a homogeneous estimation
problem, then $\hat{\theta}_{\textit{IP}}=\hat{\theta}_{\textit{ML}}$.
\end{thm}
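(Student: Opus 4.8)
The plan is to reduce both estimators to level sets of the single bivariate function $R_\theta(x)$ and then show those level sets coincide. First I would rewrite the maximum likelihood estimator in terms of $R$: since for a fixed observation $x$ the marginal $r(x)$ is constant in $\theta$, maximising $f(x|\theta)$ over $\theta$ is the same as minimising $R_\theta(x)=\log(r(x)/f(x|\theta))$ over $\theta$. Hence $\hat{\theta}_{\textit{ML}}(x)=\argmin_{\theta} R_\theta(x)$, and by comprehensiveness this is precisely the set $\{\theta\in\Theta \mid R_\theta(x)=R_{\textit{opt}}\}$, where $R_{\textit{opt}}$ is the problem-wide constant of Definition~\ref{D:comprehensive}. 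On the other side, unwinding the definition of the Ideal Point reverse estimator gives $\hat{\theta}_{\textit{IP}}(x)=\{\theta\in\Theta \mid x\in\argmin_{x'} R_\theta(x')\}=\{\theta\in\Theta \mid R_\theta(x)=R^*_\theta\}$, and by homogeneity $R^*_\theta$ equals the constant $R^*$ for every $\theta$. So both estimators are level sets of $R_\theta(x)$ viewed as a function of $\theta$, one at height $R_{\textit{opt}}$ and one at height $R^*$.

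The crux is therefore the single scalar identity $R^*=R_{\textit{opt}}$, after which the two level sets are literally the same set and the theorem follows. I would establish this by a two-sided sandwich. For one inequality, fix any $\theta_0$ and let $x_0$ attain the minimum defining $R^*_{\theta_0}=R^*$; then $R_{\textit{opt}}=R_{\textit{opt}}(x_0)=\min_{\theta} R_\theta(x_0)\le R_{\theta_0}(x_0)=R^*$. For the reverse inequality, fix any $x_1$ and let $\theta_1$ attain the minimum defining $R_{\textit{opt}}(x_1)=R_{\textit{opt}}$; then $R^*=R^*_{\theta_1}=\min_{x} R_{\theta_1}(x)\le R_{\theta_1}(x_1)=R_{\textit{opt}}$. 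Combining gives $R^*=R_{\textit{opt}}$. This is really just the observation that $\min_{x}\min_{\theta} R_\theta(x)=\min_{\theta}\min_{x} R_\theta(x)$, specialised via the two constancy hypotheses.

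Finally I would close the loop directly. Given $\theta\in\hat{\theta}_{\textit{IP}}(x)$ we have $R_\theta(x)=R^*=R_{\textit{opt}}$, so $\theta$ minimises $R_\theta(x)$ over $\theta$ and hence $\theta\in\hat{\theta}_{\textit{ML}}(x)$; conversely, $\theta\in\hat{\theta}_{\textit{ML}}(x)$ yields $R_\theta(x)=R_{\textit{opt}}=R^*=R^*_\theta$, so $x$ minimises $R_\theta(\cdot)$ over $X$ and thus $\theta\in\hat{\theta}_{\textit{IP}}(x)$. This proves set equality at every $x$, i.e.\ $\hat{\theta}_{\textit{IP}}=\hat{\theta}_{\textit{ML}}$.

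I do not expect any serious obstacle: the paper's blanket continuity and everywhere-positivity assumptions guarantee that all the minima in the sandwich argument are attained, so the only things to watch are keeping the roles of the two constants straight ($R^*$ being a minimum over $X$, $R_{\textit{opt}}$ a minimum over $\Theta$) and verifying that the ML-to-$R$ rewriting drops the $\log r(x)$ term cleanly. The entire content of the theorem is that homogeneity and comprehensiveness together force $R_\theta(x)$ into a degenerate structure in which every minimum-over-$X$ and every minimum-over-$\Theta$ sits at one common global value.
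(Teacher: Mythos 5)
Your proposal is correct and follows essentially the same route as the paper's proof: both reduce $\hat{\theta}_{\textit{ML}}(x)$ and $\hat{\theta}_{\textit{IP}}(x)$ to level sets of $R_\theta(x)$, establish the key identity $R^*=R_{\textit{opt}}$ from homogeneity and comprehensiveness, and conclude the sets coincide. Your explicit two-sided sandwich for $R^*=R_{\textit{opt}}$ is just a slightly more symmetric phrasing of the paper's argument, which fixes a single $\theta_0$ and its ideal point $x_0$ and rules out any smaller value by contradiction with homogeneity.
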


\begin{proof}
By definition,
\[
\tilde{x}_{\text{IP}}(\theta)=\argmin_{x\in X} R_{\theta}(x)
=\{x\in X | R_{\theta}(x)=\min_{x'\in X} R_{\theta}(x')\}.
\]

By assumption, the estimation problem is parameter-homogeneous, so
$\min_{x\in X} R_{\theta}(x)$
is a constant, $R^*$, independent of $\theta$.
Substituting $R^*$ into the definition of $\tilde{x}_{\text{IP}}$ and
calculating the functional inverse, we get
\[
\hat{\theta}_{\text{IP}}(x)=\{\theta\in\Theta | R_{\theta}(x)=R^*\}.
\]

For an arbitrary choice of $\theta_0$,
let $x_0$ be such that $x_0\in \tilde{x}_{\text{IP}}(\theta_0)$.
The value of $R_{\theta_0}(x_0)$ is $R^*$, and there
certainly is no $\theta'\in\Theta$ for which
$R_{\theta'}(x_0)<R^*$ (or this would contradict parameter-homogeneity),
so, using the notation of Definition~\ref{D:homogeneous},
$R_{\textit{opt}}=R_{\textit{opt}}(x_0)=R^*$.

Thus,
\begin{align*}
\hat{\theta}_{\text{IP}}(x)&=\{\theta\in\Theta | R_{\theta}(x)=R^*\} \\
&=\argmin_{\theta\in\Theta} R_{\theta}(x) \\
&=\argmin_{\theta\in\Theta} \log\left(\frac{r(x)}{f(x|\theta)}\right) \\
&=\argmax_{\theta\in\Theta} f(x|\theta)
=\hat{\theta}_{\text{ML}}(x).
\end{align*}
\end{proof}

\begin{cor}\label{C:SMML_ML}
In any regular estimation problem, for every $x\in X$,
\[
\hat{\theta}_{\text{SMML}}(x) \cap \hat{\theta}_{\text{ML}}(x) \ne \emptyset.
\]
\end{cor}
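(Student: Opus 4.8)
The plan is to obtain Corollary~\ref{C:SMML_ML} by composing the two preceding theorems, using the definition of regularity to supply their hypotheses. The essential observation is that a regular problem satisfies every precondition needed by both Theorem~\ref{T:SMML_IP} and Theorem~\ref{T:IP_ML}, so no new analytic work is required; the proof is purely a matter of verifying that the labels line up.

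First I would unpack the definition of \emph{regular}: such a problem is, by stipulation, observation-transitive, homogeneous, concentrated, and local. Two of these facts feed into Theorem~\ref{T:IP_ML}, which asks for a homogeneous, comprehensive problem. Homogeneity is immediate. Comprehensiveness is not assumed directly in the definition, so I would invoke Lemma~\ref{L:comprehensive}, which guarantees that every observation-transitive problem is comprehensive; since regularity includes observation-transitivity, the problem is comprehensive. With both hypotheses discharged, Theorem~\ref{T:IP_ML} yields the identity $\hat{\theta}_{\text{IP}}=\hat{\theta}_{\text{ML}}$ as functions, and in particular $\hat{\theta}_{\text{IP}}(x)=\hat{\theta}_{\text{ML}}(x)$ for every $x\in X$.

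Next I would apply Theorem~\ref{T:SMML_IP}, whose hypothesis is exactly that the problem be regular. That theorem gives $\hat{\theta}_{\text{SMML}}(x)\cap\hat{\theta}_{\text{IP}}(x)\ne\emptyset$ for every $x\in X$. Substituting the identity from the previous step replaces $\hat{\theta}_{\text{IP}}(x)$ by $\hat{\theta}_{\text{ML}}(x)$, delivering $\hat{\theta}_{\text{SMML}}(x)\cap\hat{\theta}_{\text{ML}}(x)\ne\emptyset$, which is the claim.

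There is no genuine obstacle here; the only point demanding care is bookkeeping. Because regularity lists homogeneity and observation-transitivity but not comprehensiveness explicitly, it would be easy to overlook that comprehensiveness must be routed through Lemma~\ref{L:comprehensive} before Theorem~\ref{T:IP_ML} can be invoked. Confirming that the hypotheses of both theorems are fully supplied by regularity is therefore the entirety of the work.
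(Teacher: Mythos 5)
Your proposal is correct and matches the paper's own proof exactly: both route comprehensiveness through Lemma~\ref{L:comprehensive} (observation-transitivity implies comprehensiveness) and then compose Theorem~\ref{T:SMML_IP} with Theorem~\ref{T:IP_ML}. If anything, your version is slightly more careful than the paper's, since you correctly treat Theorem~\ref{T:SMML_IP} as giving only a nonempty intersection rather than an ``equating'' of estimators.
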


\begin{proof}
From Lemma~\ref{L:comprehensive} we know every observation-transitive problem
is observation-homogeneous, so we can apply both Theorem~\ref{T:SMML_IP}, equating
the SMML estimator with the IP one, and Theorem~\ref{T:IP_ML}, equating the
IP one with ML.
\end{proof}

\appendices

\section{Analysis of MML approximations}\label{S:approximations}

We show regarding the MML approximations used by
\citet{dowe1997resolving} and \citet{Wallace2005}, respectively, that on
the scale-free Neyman-Scott problem, i.e.\ the Neyman-Scott problem under
a scale-free prior, both converge to the ML estimate, and are therefore
also, like SMML, not consistent for the problem.

While not satisfying consistency, the fact that all three algorithms converge
to the same limit does suggest that, at least for this problem, the two
MML algorithms studied are good approximations, adequately modelling the
limit behaviour of SMML.

\subsection{The Wallace-Freeman approximation}\label{SS:WF}

Perhaps the most widely used variant of MML is the Wallace-Freeman
approximation.
\begin{defi}
\emph{The Wallace-Freeman estimator (WF-MML)} is
\begin{equation}\label{Eq:wf}
\hat{\theta}_{\text{WF}}(x)\defeq\argmax_{\theta}\frac{f(\theta|x)}{\sqrt{|\mathcal{I}(\theta)|}}
=\argmax_{\theta} f(x|\theta) \frac{h(\theta)}{\sqrt{|\mathcal{I}(\theta)|}},
\end{equation}
where $\mathcal{I}(\theta)$ indicates the
Fisher information \citep{lehmann2006theory, savage1976rereading}.
\end{defi}

This was derived in \citet{WallaceFreeman1987} by use of a quadratic
approximation to the message length.

Proving that WF-MML is not consistent for the scale-free Neyman-Scott problem
is a direct corollary of the following (straightforward) theorem.
We list it as folkloric because, although we could not
find it proved explicitly in the literature, it is clearly a known result.
For example, it is alluded to in \cite[p. 412]{Wallace2005}.

It details the behaviour of WF-MML on problems that have a
\emph{Jeffreys prior} \citep{Jeffreys1946invariant, Jeffreys1998theory}.
A Jeffreys prior is a prior satisfying for all
$\theta\in\Theta$,
\[
h_{\text{Jeffreys}}(\theta) \propto \sqrt{|\mathcal{I}(\theta)|}.
\]
It is one of the canonical non-informative priors computable for any
frequentist estimation problem, and is routinely used, e.g., by
objective Bayesians, in lieu of other information.

\begin{thm}[folklore]\label{T:mml}
In any estimation problem $(\mathbf{x},\boldsymbol{\theta})$, where
$\boldsymbol{\theta}$ is distributed according to the problem's Jeffreys prior,
the estimate of the Wallace-Freeman approximation coincides with ML.
\end{thm}

\begin{proof}
By definition, the Jeffreys prior is proportional to the square root of the
determinant of the Fisher information matrix.
Hence, when substituting this prior for $h(\theta)$ in \eqref{Eq:wf}, the
only part of the expression that remains dependent on $\theta$ is
$f(x|\theta)$, the likelihood. Thus, it becomes a Maximum Likelihood estimator.
\end{proof}

An immediate corollary is therefore
\begin{cor}\label{C:bad_prior}
For any frequentist estimation problem over which ML is inconsistent, there
exists a (possibly improper) prior such that the Wallace-Freeman approximation
is also inconsistent over the same problem under said prior.
\end{cor}

\begin{proof}
The Jeffreys prior is an example of such a prior, because, by
Theorem~\ref{T:mml}, under this prior the Wallace-Freeman approximation
coincides with ML.
\end{proof}

\begin{cor}
WF-MML is not consistent over the Neyman-Scott problem with a scale-free
prior. Its asymptotic behaviour for this problem is identical to that of ML.
\end{cor}

\begin{proof}
This follows immediately from the previous results, because the
scale-free prior is a Jeffreys prior for this problem, as can be ascertained
directly by computing the Fisher information matrix.
\end{proof}

\subsection{Ideal Group}\label{SS:idealgroup}

In this section, we analyse the Ideal Group MML approximation, and show that
it is inconsistent for the Neyman-Scott problem under the scale-free prior,
by utilising our notion of an ideal point, defined in Section~\ref{SS:ip}.

\begin{thm}\label{T:IG}
The Ideal Group estimator is not consistent for the scale-free
Neyman-Scott problem. In particular, it contains for $(\sigma,\mu)$
the point $(s,m)$, which is the (inconsistent) Maximum Likelihood estimate,
as the Ideal Point.
\end{thm}

\begin{proof}
Recall that the ideal point is defined as the $x$ value
minimising $R_{\theta}(x)$, and for this reason guarantees that
the ideal group for $\theta$ necessarily includes it.

The Ideal Point estimate for the scale-free Neyman-Scott problem was given
in \eqref{Eq:NSIP}, in the proof of Corollary~\ref{C:inconsistent} in
Section~\ref{SS:SMML_IP}.
This estimate is identical to the Maximum Likelihood estimate,
\[
\hat{\theta}_{\text{ML}}(\sigma,\mu)=(s,m),
\]
and is well known to be inconsistent, for which reason the Ideal Group solution
is also inconsistent.
\end{proof}

\section{Proof that Neyman-Scott is local}\label{S:NSlocal}

We prove Lemma~\ref{L:NSlocal}, stating that the
scale free Neyman-Scott problem is local.

\begin{proof}
Set $k=2N+1+c^N$, for a $c$ value to be chosen later on.
Importantly, $k$, $c$ and all other constants introduced later on in this
proof (e.g., $T$, $\Delta$ and $\Delta'$) depend solely on $N$ and $J$ and
are not dependent on $\theta$. As such, they are constants of the
construction.

Let $T=N \log(c+1)$, and for $n=1,\ldots,N$
let $\mu^{n+}$ be the vector identical to $\mu$ except that its $n$'th
element equals $\mu_n+\sigma\sqrt{2 T}$.
Let $\mu^{n-}$ be the vector identical to $\mu$
except that its $n$'th element equals $\mu_n-\sigma\sqrt{2 T}$.

For $\theta_1,\ldots,\theta_{2N}$, we use all
$(\sigma, \mu^{n+})$ and all $(\sigma, \mu^{n-})$.
Next, we pick $\theta_{2N+1}=(e\sigma,\mu)$, where $e$ is
Euler's constant.

This leaves a further $c^N$ values of $\{\theta_i\}$ to be assigned. To assign these,
divide for each $n=1,\ldots,N$ the range between $\mu^{n-}$ and $\mu^{n+}$
into $c$ equal-length segments, and let $\Omega_n$ be the set containing the
centres of these segments. We define our remaining $\theta$ values as
\[
\Theta'=\left\{\left(\frac{\sqrt{2NJ}}{c}\sigma,\mu_1',\ldots,\mu_N'\right)\middle|\forall n, \mu_n'\in\Omega_n\right\}.
\]

We will show that, for a constant $V_0$ to be chosen later on,
outside a subset of $X$ of total scaled probability $V_0$,
\[
e^T f(x|\theta) < \max_i f(x|\theta_i).
\]
Equivalently:
\begin{equation}\label{Eq:logT}
\max_i \log f(x|\theta_i) - \log f(x|\theta) > T.
\end{equation}

Showing this is enough to prove the lemma, because for a sufficiently large
$c$,
\[
e^T=(c+1)^N\ge c^N+Nc^{N-1}>c^N+2N+2=k+1,
\]
so by choosing $\gamma=\frac{k+1}{k}$ the conditions of
Definition~\ref{D:local} are satisfied. (Recall that $k$ is a constant of the
construction, and therefore $\gamma$ can depend on $k$.)

To prove \eqref{Eq:logT}, let us divide the problem into cases. First, let us
show that this holds true for any $x=(s,m)$ value for which, for any $n$,
$|m_n-\mu_n|>\sigma\sqrt{2T}$. To show this, assume without loss of generality
that for a particular $n$ the equation $m_n-\mu_n>\sigma\sqrt{2T}$ holds true.
\begin{align*}
\log \max_i f(x|\theta_i) & -\log f(x|\theta) \\
&\ge \log f(s,m|\sigma,\mu^{n+}) -\log f(s,m|\sigma,\mu) \\
&=-JT+\frac{J\sqrt{2T}}{\sigma}(m_n-\mu_n) \\
&>-JT+2JT
>T.
\end{align*}

Next, we claim that there is a $\Delta$ value such that if
$s/\sigma>\Delta$, \eqref{Eq:logT}
holds. This can be demonstrated as follows.
\begin{align*}
\log & \max_i f(x|\theta_i) -\log f(x|\theta) \\
& \ge \log f(s,m|e\sigma,\mu) -\log f(s,m|\sigma,\mu) \\
&=-NJ+\left(1-\frac{1}{e^2}\right)\frac{NJs^2+J\sum_{n=1}^{N} (m_n-\mu_n)^2}{2\sigma^2} \\
&> -NJ + \left(1-\frac{1}{e^2}\right)\frac{NJ}{2}\Delta^2.
\end{align*}
By choosing a high enough value of $\Delta$, this lower bound can be made
arbitrarily large. In particular, it can be made larger than $T$, making
\eqref{Eq:logT} hold.

Our last case is one where $s/\sigma<\Delta'$, for some $\Delta'$ to be
computed. In considering this case, we can assume that for every $n$,
$|m_n-\mu_n|\le\sigma\sqrt{2T}$, or else \eqref{Eq:logT} holds due to our
first claim. With this assumption, the value of $|m_n-\mu_n'|$ for every $n$
is at most $\sigma\sqrt{2T}/c$ for some $\mu_n'\in \Omega_n$. Let
$(\frac{\sqrt{2NJ}}{c}\sigma,\mu')$ be the element of $\Theta'$ with $\mu'$
closest to $m$ in every coordinate.
\begin{align}
\log & \max_i f(x|\theta_i)-\log f(x|\theta) \nonumber\\
&\ge \log f\left(s,m\middle|\frac{\sqrt{2NJ}}{c}\sigma,\mu'\right)-\log f(s,m|\sigma,\mu) \nonumber\\
&\ge\log\left[\frac{c^{NJ}}{(2 \sqrt{\pi N J}\sigma)^{NJ}} e^{-\frac{c^2}{2 N J}\frac{NJs^2+2NJT\sigma^2/c^2}{2\sigma^2}}\right] \nonumber\\
&\quad\quad -\log\left[\frac{1}{(\sqrt{2\pi}\sigma)^{NJ}} e^{-\frac{NJs^2}{2\sigma^2}}\right] \nonumber\\
&=NJ\log\left(\frac{c}{\sqrt{2 N J}}\right) + \left(1-\frac{c^2}{2 N J}\right) \frac{NJs^2}{2\sigma^2} -\frac{T}{2}.\label{Eq:difflhs}
\end{align}

Because for a large enough value of $c$, the expression $1-c^2/2 N J$ is
negative, the value of \eqref{Eq:difflhs} is minimised when $s/\sigma$ is
maximal. Therefore,
\[
\left(1-\frac{c^2}{2 N J}\right) \frac{NJs^2}{2\sigma^2}
> \left(1-\frac{c^2}{2 N J}\right) \frac{NJ\Delta'^2}{2},
\]
which, together with \eqref{Eq:difflhs}, leads to
\begin{align*}
\log & \max_i f(x|\theta_i) -\log f(x|\theta) \\
&> NJ\log\left(\frac{c}{\sqrt{2 N J}}\right) + \left(1-\frac{c^2}{2 N J}\right) \frac{NJ\Delta'^2}{2} -\frac{T}{2}.
\end{align*}

The value of $\Delta'$ can be made arbitrarily small. For example, we
may set $\Delta'$ to satisfy
\[
\left(\frac{c^2}{2 N J}-1\right) \frac{NJ\Delta'^2}{2} \le \frac{T}{4}.
\]
If we set $\Delta'$ in this way, it only remains to be proved that
\[
NJ\log\left(\frac{c}{\sqrt{2 N J}}\right)-\frac{T}{4}-\frac{T}{2}\ge T.
\]
Substituting in the definition of $T$ and simplifying, we get
\[
\frac{1}{(2 N J)^{2J}} c^{4J} \ge (c+1)^7.
\]
Considering that $J\ge 2> 7/4$, the left-hand side is a polynomial of higher
degree than the right-hand side. Therefore, for a large-enough $c$, the
equation holds.

We have therefore shown that \eqref{Eq:logT} holds for every $x\in X$, except
within a bounding box of size
$V=\left(2\sqrt{2T}\right)^N \log \frac{\Delta}{\Delta'}$
in $(\log s,m/\sigma)$-space, a size that is independent of $\theta$.
Because this bounding box bounds $s/\sigma$ from
below by a constant $\Delta'$, its volume is also bounded by
$V_0=V/\Delta'^N$ in $(\log s,m/s)$-space,
and this value is also independent of $\theta$.

Recall, however, that volume in $(\log s,m/s)$-space equals
(or is proportional to)
scaled probability in $X$. Equation \eqref{Eq:logT} holds, therefore,
everywhere except in a subset whose scaled probability is bounded by a constant
independent of $\theta$.
\end{proof}

\section*{Acknowledgement}
The author would like to thank Graham Farr for his extensive comments
on multiple drafts of the paper.

Thank you also to David L.\ Dowe for his considerable help in
preparing this manuscript, and especially for pointing out that the
scale free prior we use coincides with the problem's
Jeffreys prior.

Additionally, thank you to this paper's editor, Ioannis Kontoyiannis, and
to two anonymous reviewers for their substantial suggestions which
considerably improved the paper's presentation.

\bibliographystyle{IEEEtran}
\bibliography{bibneymanscott}

\begin{IEEEbiographynophoto}{Michael Brand}
received a Ph.D. in Information Technology from Monash University, Australia,
in 2013, an M.Sc. in Mathematics and Computer Science (specialising in
Applied Mathematics) from The Weizmann Institute of Science,
Israel, in 2006, and a B.Sc. in Industrial Engineering from
Tel-Aviv University, Israel, in 1995. He is an Adjunct A/Prof in Data Science
and Artificial Intelligence at Monash University, where he also headed the
Monash Centre for Data Science, and is the head and founder
of analytics consultancy firm Otzma Analytics. He previously served as
Chief Data Scientist at Telstra Corporation, Chief Scientist at Verint
Systems, Senior Principal Data Scientist at Pivotal and CTO Group
Algorithm Leader at PrimeSense.
His research interests include theoretical machine learning,
the theory of computation, signal modelling, Big Data computing, and
combinatorics, and he holds 16 patents in computer vision,
natural language processing and database algorithms.
\end{IEEEbiographynophoto}

\end{document}